\def\1{\bm{1}}
\DeclareMathAlphabet{\mathsfit}{\encodingdefault}{\sfdefault}{m}{sl}
\SetMathAlphabet{\mathsfit}{bold}{\encodingdefault}{\sfdefault}{bx}{n}
\def\gA{{\mathcal{A}}}
\def\gS{{\mathcal{S}}}
\def\gT{{\mathcal{T}}}
\def\0{{\bf 0}}
\def\1{{\bf 1}}
\def\AM{{\mathcal A}}
\def\PM{{\mathcal P}}
\def\SM{{\mathcal S}}
\def\RB{{\mathbb R}}
\def\EB{{\mathbb E}}
\def\PB{{\mathbb P}}
\def\ie{{\em i.e.\/}}
\theoremstyle{plain}
\newtheorem{thm}{Theorem}[section]
\newtheorem{lem}{Lemma}[section]
\newtheorem{asmp}{Assumption}[section]
\title{Federated Reinforcement Learning with Constraint Heterogeneity}
\author{ 
	\textbf{Hao Jin}\\
	jin.hao@pku.edu.cn\\
	Peking University\\
	\And
	\textbf{Liangyu Zhang}\\
	zhangliangyu@pku.edu.cn\\
	Peking University\\
    \And
	\textbf{Zhihua Zhang}\\
	zhzhang@math.pku.edu.cn\\
	Peking University\\
}
\date{}
\begin{document}
\maketitle

\begin{abstract}
We study a Federated Reinforcement Learning (FedRL) problem with constraint heterogeneity.
In our setting, we aim to solve a reinforcement learning problem with multiple constraints while $N$ training agents are located in $N$ different environments with limited access to the constraint signals and they are expected to collaboratively learn a policy satisfying all constraint signals.
Such learning problems are prevalent in scenarios of Large Language Model (LLM) fine-tuning and healthcare applications.
To solve the problem, we propose federated primal-dual policy optimization methods based on traditional policy gradient methods.
Specifically, we introduce $N$ local Lagrange functions for agents to perform local policy updates, and these agents are then scheduled to periodically communicate on their local policies. 
Taking natural policy gradient (NPG) and proximal policy optimization (PPO) as policy optimization methods, we mainly focus on two instances of our algorithms, \ie, {FedNPG} and {FedPPO}.
We show that
FedNPG achieves global convergence with an $\tilde{O}(1/\sqrt{T})$ rate, and FedPPO efficiently solves complicated learning tasks with the use of deep neural networks.
\end{abstract}


\section{Introduction}
Recent years have witnessed the growing popularity of reinforcement learning (RL) \citep{sutton1998introduction} in solving challenging problems, such as playing the games of Go \citep{hessel2018rainbow,silver2016mastering,silver2017mastering} and driving automobiles \citep{kiran2021deep,fayjie2018driverless,chen2019model}.
In classical settings of RL, the agent continually interacts with a fixed environment, and utilizes such collected experience to learn a policy maximizing specified reward signals.
However, real-life applications have raised many new practical settings for policy learning \citep{qin2021density,jin2022federated,junges2016safety,chow2017risk,liu2019lifelong}.
As one of these emerging settings, federated reinforcement learning (FedRL) focuses on how to coordinate agents located separately to learn a well-performing policy without privacy violations on individually collected experience~\citep{liu2019lifelong,zhuo2019federated, wang2020federated,nadiger2019federated,jin2022federated}.
In the setting of FedRL, major challenges of policy learning come from the misalignment of involved agents, which is typically known as heterogeneity \citep{jin2022federated,nadiger2019federated}.

In this paper, we mainly consider the \textit{constraint heterogeneity}, which is prevalent due to the trend of distributed data collection and the necessity of privacy preservation.
Suppose there is a reinforcement learning problem with multiple constraints, and the monitoring on a constraint signal is sometimes costly.
During the training process, it is impossible for any single agent to collect the whole set of constraint signals.
With the introduction of a federated platform, any single participated agent only has to focus on a certain constraint signal while it is finally guaranteed with a well-behaved model satisfying all constraints.
\textit{Constraint heterogeneity} naturally arises in the training process, where different agents have access to different constraint signals.
Take the fine-tuning of Large Language Models as an example (LLM).
There is a common science that LLM trained on raw Internet-based data suffers from problems known as "social bias" \cite{gallegos2023bias}, and introducing constraints on fairness of generated text is believed to address the problem.
Accompanied with the increasing trend of federated optimization in training LLMs \cite{ro2022scaling,chen2023federated}, it is hard to guarantee an alignment of constraint signals received in different devices corresponding to different groups of users.
To learn the LLM satisfying constraints of all participated devices, it is natural to consider constraint heterogeneity in federated reinforcement learning.
In fact, constraint heterogeneity is also prevalent in applications where training data is distributed among different agents and the labelling on constraint signals is costly, such as derivation of dynamic treatment regime (DTR) for patients in different physical conditions \citep{liu2016robust,zhang2019reinforcement}. 

We formulate FedRL with constraint heterogeneity as a seven-tuple of $\langle \SM, \AM, r, \{(c_i,d_i)\}_{i=1}^N, \gamma, \PM, \{\Gamma_i\}_{i=1}^N\rangle$.
In our setting,
$N$ agents share the same state space $\SM$, action space $\AM$, reward function $r$, discounted factor $\gamma$, and transition dynamic $\PM$; the $i$-th constraint is modeled with its associated cost functions $c_i$ and corresponding threshold $d_i$;$\{\Gamma_i\}_{i=1}^N$ is the source of constraint heterogeneity, and $\Gamma_i$ indexes the constraints accessible for the $i$-th agent.
Without loss of generality, we assume in our discussion that the $i$-th agent can only access the $i$-th constraint $(c_i,d_i)$ in addition to the reward function $r$, \ie, $~\Gamma_i=\{i\}$.
In this way, none of the $N$ agents has full access to all of $N$ constraints, which prevents any agent from locally adopting existing methods of constrained reinforcement learning \citep{chow2017risk,liang2018accelerated,tessler2018reward,bohez2019value,xu2021crpo,liu2022constrained} to solve the problem.

We propose a class of federated primal-dual policy optimization methods to solve FedRL problems with constraint heterogeneity.
Suppose that we parameterize a policy $\pi$ with $\theta\in\Theta$, and denote its cumulative performance w.r.t.\ reward functions $r$ and cost functions $\{c_i\}_{i=1}^N$ as functions of $\theta$, \ie, $~J_r(\theta)$ and $\{J_{c_i}(\theta)\}_{i=1}^N$.
The policy learning is then transformed into the following constrained optimization problem w.r.t. $\theta$:
\begin{align*}
    \max_\theta~~& J_r(\theta),\\
    \texttt{s.t.}~~ J_{c_i}(\theta)&\leq d_i,~ i\in [N] := \{1,2,\dots,N\}.
\end{align*}
When there is no constraint heterogeneity, primal-dual methods considers the following Lagrange function:
\begin{equation*}
L_0(\lambda,\theta)=J_r(\theta)+\sum_{i=1}^N \lambda_k(d_i-J_{c_i}(\theta)),
\end{equation*}
where $\lambda=(\lambda_1,\dots,\lambda_N)^T\in\mathbb{R}_+^N$ serves as the vector of Lagrange multipliers associated with the $N$ constraint functions.
In this way, these methods turn to solve the min-max optimization based on $L_0(\lambda,\theta)$ through gradient-descent-ascent, which solves original problem when strong duality holds \citep{oh2017zero,paternain2019constrained,ding2020natural}.
However, the existence of constraint heterogeneity prohibits us from using $L_0(\lambda,\theta)$ to update local policies.
To address the issue, we decompose $L_0(\lambda,\theta)$ into $N$ local Lagrange functions $\{L_i(\lambda_i,\theta)\}_{i=1}^N$.
Specifically, the $i$-th local Lagrange function does not require additional knowledge other than $r$ and $(c_i,d_i)$ and the $i$-th agent is able to conduct primal-dual updates accordingly.
Furthermore, our methods require $N$ agents to periodically communicate their local policies in order to find a policy satisfying all the constraints.
Finally, we instantiate different different federated algorithms with different policy optimization methods.
Specifically, in FedRL problems with tabular environments, we devise the {FedNPG} algorithm
based on natural policy gradient. 
We show that {FedNPG} achieves an $\widetilde{O}(1/\sqrt{T})$ global convergence rate, and also conduct empirical analysis. 
To solve complicated FedRL tasks on real data, we resort to the deep neural networks, devise the {FedPPO} algorithm by utilizing proximal policy optimization, and empirically validate its performance  in CartPole, Acrobot and Inverted-Pendulum.

In summary, our paper mainly offers the following contributions:
\begin{itemize}
    \item We are the first to consider federated reinforcement learning (FedRL) with constraint heterogeneity, in which different agents have access to different constraints and the learning goal is to find an optimal policy satisfying all constraints.
    \item We propose a class of federated primal-dual policy optimization methods to solve FedRL problems with constraint heterogeneity, which involves the introduction of local Lagrange functions and periodic aggregation of locally updated policies.
    \item We analyze theoretical performance of our method when adopting NPG as the policy optimizer, and prove that {FedNPG} achieves global convergence at an $\tilde{O}(1/\sqrt{T})$ rate.
    Moreover, we evaluate empirical performance of {FedPPO} in complicated tasks of FedRL with the use of deep neural networks.
\end{itemize}

\section{Constrained Reinforcement Learning}
Before formulating our concerned FedRL with constraint heterogeneity, we first discuss the constrained reinforcement learning (constrained RL) problem including constrained Markov decision processes
and primal-dual policy optimization methods.

\subsection{Constrained Markov decision processes}
Constrained reinforcement learning problems are usually modeled with constrained Markov decision processes (CMDPs), which is formulated as the tuple $\langle \SM,\AM,r,\{(c_i,d_i)\}_{i=1}^N,\gamma,\PM\rangle$.
Given any policy $\pi$, the cumulative reward and cumulative cost functions are defined as follows:
\begin{align*}
    J_r(\pi)&=\EB_{s_0\sim\rho}\left [ \sum_{t=0}^\infty\gamma^t r(s_t,a_t)\mathrel{\bigg|} a_t\sim\pi(\cdot|s_t)\right],\\
    J_{c_i}(\pi)&=\EB_{s_0\sim\rho}\left [ \sum_{t=0}^\infty\gamma^t c_i(s_t,a_t)\mathrel{\bigg|},a_t\sim\pi(\cdot|s_t)\right],  
\end{align*}
where $\rho$ indicates the distribution of initial states, and the expectations are taken over the randomness of both the policy $\pi$ and the environment $\PM$.
In this way, the learning goal can be formulated as the following optimization problem:
\begin{equation}
\label{Prob_CMDP}
    \max_\pi J_r(\pi),~~\texttt{s.t.} J_{c_i}(\pi)\leq d_i,~i\in [N].
\end{equation}
We will always assume the problem above is feasible and denote the optimal policy as $\pi^*$.

\subsection{Primal-dual policy optimization methods}
Policy optimization methods are prevalent in solving reinforcement learning problems with large state spaces.
Suppose the policy $\pi$ is parameterized by $\theta\in\Theta\subset\RB^d$ (denoted $~\pi_\theta$).
These methods transform Problem \ref{Prob_CMDP} as the following constrained optimization problem w.r.t. $\theta$:
\begin{equation}
\label{Prob_CMDP_theta}
    \max_{\theta\in\Theta} J_r(\theta),~~\texttt{s.t.} J_{c_i}(\theta)\leq d_i,~i\in [N],
\end{equation}
where $J_r(\theta)$ and $J_{c_i}(\theta)$ represent $J_r(\pi_\theta)$ and $J_{c_i} (\pi_\theta)$.
To solve Problem \ref{Prob_CMDP_theta}, primal-dual methods augment the objective as follows:
\begin{equation}
\label{aug_obj}
L_0(\lambda,\theta)=J_r(\theta)+\sum_{i=1}^N\lambda_i(d_i-J_{c_i}(\theta)),
\end{equation}
where $\lambda_i\geq 0$ serves as the Lagrange multiplier of the $i$-th constraint and $\lambda=(\lambda_1,\dots,\lambda_N)^T$ represents the vector of these $N$ multipliers.
When strong duality holds in constrained RL problems \cite{oh2017zero}, Problem \ref{Prob_CMDP_theta} is equivalent to the following min-max policy optimization problem:
\begin{equation}
\label{Prob_CMDP_theta_minimax}
 \min_{\lambda\succcurlyeq 0} \max_{\theta\in\Theta} L_0(\lambda,\theta).
\end{equation}
Now these primal-dual methods solve Problem \ref{Prob_CMDP_theta_minimax} with classical projected gradient descent-ascent algorithms.
Specifically, denoting the policy parameters and Lagrange multipliers at the $t$-th step by $\theta^{(t)}$ and $\lambda^{(t)}$,  we compute $(\theta^{(t+1)},\lambda^{(t+1)})$ by
\begin{equation*}
\begin{aligned}
\theta^{(t+1)}&=\texttt{Proj}_\Theta(\theta^{(t)}+\eta_\theta w^{(t)}),~\\
    \lambda^{(t+1)}&=\texttt{Proj}_\Lambda(\lambda^{(t)} - \eta_\lambda l^{(t)}).
\end{aligned}
\end{equation*}
Here $\Theta$ is the space of policy parameters, $\Lambda$ is the set containing valid Lagrange multipliers, $\eta_\theta,~\eta_\lambda$ respectively represent the learning rates of policy parameters and Lagrange multipliers, and $(l^{(t)},w^{(t)})$ denote the gradient directions w.r.t $(\lambda,\theta)$ obtained from certain policy optimization methods.
\citet{tessler2018reward} directly took $(\nabla_\lambda L_0(\lambda^{(t)},\theta^{(t)}),\nabla_\theta L_0(\lambda^{(t)},\theta^{(t)}))$ as $(l^{(t)},w^{(t)})$, while \citet{ding2020natural} set $w^{(t)}$ as the natural policy gradient of $\theta$  and set $l_{(t)}$ as the clipped gradient of $\lambda$.
\citet{tessler2018reward} stated that such gradient descent-ascent algorithms converge to 
\begin{equation*}
(\lambda^*,\theta^*)=\arg \min_{\lambda\succcurlyeq 0} \max_{\theta\in\Theta} L_0(\lambda,\theta),
\end{equation*}
where $\theta^*$ solves Problem \ref{Prob_CMDP_theta} and $\pi_{\theta^*}$ represents the optimal policy when strong duality holds.

\section{Federated Reinforcement Learning with constraint heterogeneity}
We formulate FedRL with constraint heterogeneity as the tuple $\langle \SM,\AM,r,\{(c_i,d_i)\}_{i=1}^N,\gamma,\PM, \{\Gamma_i\}_{i=1}^N\rangle$.
The learning goal of the problem is the same with that of CMDPs shown in Problem \ref{Prob_CMDP}, \ie, finding the policy maximizing the cumulative reward function while satisfying all $N$ constraints.
Despite such similarity, FedRL with constraint heterogeneity has a totally different interpretation of $\{(c_i,d_i)\}_{i=1}^N$ due to the heterogeneity introduced by $\{\Gamma_i\}_{i=1}^N$:
$N$ agents are separately located in $N$ environments, and the $i$-th agent only has access to constraint functions $\{(c_j,d_j)\}_{j\in\Gamma_i}$ in addition to the reward function $r$.
The introduction of $\{\Gamma_i\}_{i=1}^N$ discriminates these $N$ agents from the omniscient agent with full access to $\{(c_i,d_i)\}_{i=1}^N$ in CMDPs.
Without loss of generality, we view constraint functions accessible to the $i$-th constraint function as a whole one and set $\Gamma_i$ to be $\{i\}$ in the following discussion.

\subsection{Local Lagrange functions from $L_0(\lambda,\theta)$}
Given the similar learning goal with CMDPs, it is natural for us to expect that primal-dual methods would work in solving FedRL problems with multiple constraints.
However, constraint heterogeneity makes it impossible for the $i$-th agent to evaluate $L_0(\lambda,\theta)$ from its own experience, because the $i$-th agent cannot collect any information about constraint functions of other agents.
To address the issue, we decompose $L_0(\lambda,\theta)$ into $N$ {local Lagrange functions} $\{L_i(\lambda_i,\theta)\}_{i=1}^N$ as follows:
\begin{align*}
L_0(\lambda,\theta)&=\sum_{i=1}^N L_i(\lambda_i,\theta),\\
    L_i(\lambda,\theta)&=\frac{1}{N}J_r(\theta)+\lambda_i(d_i-J_{c_i}(\theta)),~\forall i\in[N]
\end{align*}
where $L_i(\lambda,\theta)$ is composed of reward function $r$ and the $i$-th constraint function $(c_i,d_i)$, which are both observable for the $i$-th agent.
Moreover, $L_i(\lambda_i,\theta)$ shares a similar formulation with $L_0(\lambda,\theta)$, where the constraint function is multiplied by a non-negative multiplier $\lambda_i$ and then added to a function related to $r$.
We denote Lagrange multipliers $\lambda$ and policy parameters $\theta$ of the $i$-th agent at the $t$-th iteration by $\lambda_i^{(t)}$ and $\theta_i^{(t)}$.
Based on the $i$-th local Lagrange function $L_i(\lambda_i,\theta)$ of $L_0(\lambda,\theta)$, the $i$-th agent is able to apply gradient descent-ascent in updating $(\lambda_i^{(t)},\theta_i^{(t)})$ to $(\lambda_i^{(t+1)},\theta_i^{(t+1)})$.
In this way, agents manage to use primal-dual methods in updating their local policies and Lagrange multipliers based on local Lagrange functions rather than the original Lagrange function.
\begin{algorithm}
    \caption{Federated primal-dual policy optimization}
    \begin{algorithmic}
    \label{Alg_FedPD}
    \small
    \STATE \textbf{Initialize:} Initial parameters $\theta^{(0)}$ and multipliers $\lambda^{(0)}$; Learning rate $\eta_\theta$ and $\eta_\lambda$;
    Projection set $\Lambda$ and $\Theta$.
    \STATE Set $t=0,~\lambda_i^{t}=\lambda^{(0)},~\forall i\in\{1,\dots,N\}$.
    \WHILE{$t<T$}
        \STATE Set $\theta_i^{(t)}=\theta^{(t)}$, $~i\in\{1,2,\dots,N\}$.
        \FOR{ $e=1$ {\bfseries to} $E$}
            \FOR{ $i=1$ {\bfseries to} $N$}
                \STATE Collect experience $\mathcal{T}_{i}^{(t)}$ based on $\pi_{\theta_{i}^{(t)}}$.
                \STATE Policy evaluation based on local experience $\gT_{i}^{(t)}$: $\hat{J}_{c_i}(\theta_i^{(t)})\approx J_{c_i}(\theta_i^{(t)}),~\hat{J}_{r}(\theta_i^{(t)})\approx J_{r}(\theta_i^{(t)})$.
                \STATE Take one-step policy optimization towards maximizing ${L}_i(\lambda_i^{(t)},\theta_i^{(t)})$ as $\texttt{Proj}_\Theta(\theta_{i}^{(t)}+\eta_\theta\hat w_i^{(t)})\rightarrow\theta_{i}^{(t+1)}$, where $\hat w_i^{(t)}$ is obtained based on $\gT_i$.
                \STATE Update multipliers as $\texttt{Proj}_\Lambda(\lambda_{i}^{(t)}-\eta_\lambda\hat{l}_i^{(t)})\rightarrow\lambda_{i}^{(t+1)}$, where $\hat{l}_i^{(t)}=\hat{J}_{c_i}(\theta_i^{(t)})-d_i$
            \ENDFOR       
            \STATE $t=t+1$
        \ENDFOR
        \STATE $N$ agents communicate $\{\theta_i^{(t)}\}_{i=1}^N$ and $\{\lambda_i^{(t)}\}_{i=1}^N$.
        \STATE Set $\theta^{(t)} = \texttt{Aggregate}_\theta(\{\lambda_i^{(t)}\}_{i=1}^N,\{\theta_i^{(t)}\}_{i=1}^N)$.
        \STATE Set $\lambda_{i}^{(t+1)} = \lambda_i^{(t)},~i\in\{1,2,\dots,N\}$.
    \ENDWHILE
    \end{algorithmic}
\end{algorithm}

\subsection{Periodic communication of local policies}
Periodic communication is necessary for policy learning in the federated setting.
However, any exchange of constraint-related experience violates agents' privacy in our case.
Instead, our proposed algorithms organise the periodic communication among these $N$ agents at a policy level.
Assume $E$ to be the number of time steps between adjacent communication rounds.
After the $N$ agents update their local policies for $E$ steps, they communicate their policy parameters and one takes a global aggregation as follows:
\begin{align*}
\theta^{(t)} = \texttt{Aggregate}_\theta&(\{\lambda_i^{(t)}\}_{i=1}^N, \{\theta_i^{(t)}\}_{i=1}^N),
\end{align*}
where $\texttt{Aggregate}_\theta$ stands for any feasible aggregation method, such as averaging in parameter space with uniform weights, \ie, $~\theta^{(t)}=\frac{1}{N}\sum_{i=1}^N\theta_i^{(t)}$.
Due to our assumption on constraint heterogeneity, the algorithm does not update Lagrange multipliers in rounds of communication.

We are ready to give a full implementation of these federated primal-dual policy optimization methods in Alg. \ref{Alg_FedPD}.
Different policy optimization methods lead to different instances of our algorithms.
For detailed theoretical analysis and empirical evaluation, we focus on the following two algorithms: for tabular environments {FedNPG} applies natural policy gradient (NPG) for policy updates of local agents; for non-tabular environments {FedPPO} conducts proximal policy optimization (PPO) on policies parameterized by deep networks.
Details of implementation are left in Appendix \ref{app_ALG}.

\section{Theoretical Analysis}
In this section, we present a theoretical analysis of {FedNPG} on its convergence performance.
In summary, we show FedNPG achieves an $\widetilde O(1/\sqrt{T})$ convergence rate.
Here $\widetilde O$ means we discard any terms of $\operatorname{poly}(\log(\cdot))$ orders.
\subsection{The FedNPG algorithm}
We first give a brief description  of how FedNPG works.
Ideally, we would like to use natural policy gradients to update the local policies, \ie,
$$
\hat w_i^{(t)}=\frac{1}{N}F(\theta_i^{(t)})^\dagger\nabla_\theta J_r(\theta_i^{(t)})-\lambda_i^{(t)}F(\theta_i^{(t)})^\dagger\nabla_\theta J_{c_i}(\theta_i^{(t)}),
$$
where $(\cdot)^{\dagger}$ denotes the matrix pseudoinverse,  
$$
\begin{aligned}
    F(\theta):=&\EB_{s\sim d^{\pi_\theta},a|s\sim\pi_\theta}[\nabla_\theta\pi_\theta(a|s)\nabla_\theta\pi_\theta(a|s)^\top],\\
    d^{\pi_\theta}(s):=&(1-\gamma)\sum_{t=0}^\infty\gamma^t\PB_{s_0\sim\rho,\pi}(s_t=s).
\end{aligned}
$$
However, in practical applications it is computationally expensive to evaluate $F(\theta)$, and $\nabla_\theta J_r(\theta), \nabla_\theta J_{c_i}(\theta)$ are usually unknown.
Therefore, we instead use sample-based NPG \cite{Agarwal2021jmlr} to update the local policies.
That is,
$$
\begin{aligned}
    \hat w_i^{(t)}&=\frac{1}{N}\hat w_r^{(t)}(i)-\lambda_i^{(t)} \hat w_{c_i}^{(t)}.
\end{aligned}
$$
And $ \hat w_r^{(t)}(i)$, $\hat w_{c_i}^{(t)}$ are obtained by solving the following optimization problems with SGD, respectively:
$$
\begin{aligned}
     \hat w_r^{(t)}(i)&\approx\arg\min_w E^{\nu^{(t)}_i}(r,\theta_i^{(t)},w),\\
    \hat w_{c_i}^{(t)}&\approx\arg\min_w E^{\nu^{(t)}_i}(c_i,\theta_i^{(t)},w).
\end{aligned}
$$
Here the $E^{\nu}(\diamond,\theta,w)$ are called the transferred compatible function approximation errors, which are defined as:
\begin{equation*}
E^{\nu}(\diamond,\theta,w):=\EB_{(s,a)\sim\nu}(A^{\pi_\theta}_\diamond(s,a)-w^\top\nabla_\theta\log\pi_\theta(a|s))^2,
\end{equation*}
where $\nu^{(t)}_i(s,a):=\pi_{\theta_i^{(t)}}(a|s)d^{\pi_{\theta_i^{(t)}}}(s)$ is the state-action occupancy measure induced by $\pi_{\theta_i^{(t)}}$.
In terms of the policy aggregation, we consider the averaging in parameter space with uniform weights in the following discussion.

\subsection{Technical assumptions}
Before presenting our main result, we firstly state our technical assumptions.
Note that the following assumptions are standard in the literature of policy optimization \cite{Agarwal2021jmlr, ding2020natural}.

\begin{asmp}[Differentiable policy class]\label{Assumption_differentiable}
We consider a parametrized policy class $\Pi_\theta=\{\pi_\theta|\theta\in\Theta\}$, such that for all $s\in\SM$,~$a\in\AM$, $\log\pi_\theta(s|a)$ is a differentiable function of $\theta$.
\end{asmp}

\begin{asmp}[Lipschitz policy class]\label{Assumption_Lipschitz}
For all $s\in\SM$,~$a\in\AM$, $\log\pi_\theta(s|a)$ is a $L_\pi$-Lipschitz function of $\theta$, i.e.,
$$
\|\nabla_\theta\log\pi_\theta(s|a)\|_{2}\leq L_\pi,\forall s\in\SM,a\in\AM,\theta\in \RB^d.
$$
\end{asmp}

\begin{asmp}[Smooth policy class]\label{Assumption_smooth}
For all $s\in\SM$,~$a\in\AM$, $\log\pi_\theta(s|a)$ is a $\beta$-smooth function of $\theta$, i.e., $\forall s\in\SM,a\in\AM,~\theta,\theta^\prime\in \RB^d$,
$$
\|\nabla_\theta\log\pi_\theta(s|a)-\nabla_\theta\log\pi_{\theta^\prime}(s|a)\|_{2}\leq \beta\|\theta-\theta^\prime\|_2.
$$
\end{asmp}

\begin{asmp}[Positive definite Fisher information]\label{Assumption_pd_Fisher}
$$
F(\theta) \preccurlyeq G^{2} I_{d} \; \text { for any } \; \theta \in \RB^{\mathrm{d}} \text {. }
$$
\end{asmp}
\begin{asmp}[Bounded estimation error]\label{Assumption_est_err}
For any $t\in\{1,...,T\}$, for each $i\in\{1,...,N\}$,
$$
\begin{aligned}
&\left\|\underset{w}{\mathrm{argmin}} E^{\nu_i^{(t)}}(r,\theta_i^{(t)},w)\right\|_2^2\leq W^2,~~\\
&\left\|\underset{w}{\mathrm{argmin}} E^{\nu_i^{(t)}}(c_i,\theta_i^{(t)},w)\right\|_2^2\leq W^2.
\end{aligned}
$$
Also,
$$
\EB \|\hat w_r^{(t)}(i)\|_2^2\leq W^2,~~\EB \|\hat w_{c_i}^{(t)}\|_2^2\leq W^2.
$$
\end{asmp}
In addition, we assume the parametrization $\pi_\theta$ realizes good function approximation in terms of transferred compatible function approximation errors, which can be close to zero as long as the policy class is rich \cite{wang2019neural} or the underlying MDP has low-rank structure \cite{jiang2017contextual}.
\begin{asmp}[Bounded function approximation error]\label{Assumption_func_approx_err}
The transferred compatible function approximation errors satisfies that $\forall t\in\{1,...,T\},~\forall i\in\{1,...,N\}$,
$$
\begin{aligned}
\min_w E^{\nu_i^{(t)}}(r,\theta_i^{(t)},w)\leq \epsilon_{\text{bias}},\\
\min_w E^{\nu_i^{(t)}}(c_i,\theta_i^{(t)},w)\leq \epsilon_{\text{bias}}.
\end{aligned}
$$
\end{asmp}

We also make the following two assumptions to ensure strong duality and boundedness of dual variables.
\begin{asmp}\label{Assumption_full_policy_class}
Assume $\PM(\SM)^{\AM}\subset\bar {\Pi}_\theta$. Here we use $\bar {\Pi}_\theta$ to denote the closure of set ${\Pi}_\theta$. 

\end{asmp}

\begin{asmp}[Slater's condition]\label{Assumption_Slater}
There exist $\xi >0$ and $\tilde\pi\in\Pi$ such that $J_{c_i}(\tilde\pi)+\xi\leq d_i$, $\forall i\in\{1,...,N\}$.
\end{asmp}

\begin{lem}[Strong duality and boundedness of dual variables]
\label{Lemma_strong_duality_bounded_dualvar}
If Assumption~\ref{Assumption_full_policy_class} and Assumption~\ref{Assumption_Slater} are true, we have:
\begin{enumerate}
    \item[(a)] $J_r(\pi^*)=\sup_\theta\inf_\lambda L_0(\theta,\lambda)$;
    \item[(b)] $\sum_{i=1}^N\lambda^*_i\leq  \frac{J_r(\pi^*)-J_r(\tilde\pi)}{\xi}\leq \frac{1}{(1-\gamma)\xi}$. 
\end{enumerate}
\end{lem}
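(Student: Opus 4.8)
The plan is to handle the two parts separately: part (a) is a zero-duality-gap (strong duality) statement, while part (b) is a short consequence of the resulting saddle-point structure together with Slater's condition. For part (a), I would first collapse the inner minimization. For fixed $\theta$, we have $\inf_{\lambda\succcurlyeq 0} L_0(\theta,\lambda)=J_r(\theta)$ when $J_{c_i}(\theta)\le d_i$ for all $i$ and $-\infty$ otherwise, so $\sup_\theta\inf_\lambda L_0(\theta,\lambda)$ equals the supremum of $J_r(\theta)$ over feasible parametrized policies. Assumption~\ref{Assumption_full_policy_class} guarantees that the closure of $\Pi_\theta$ contains every stationary policy, so by continuity of $J_r$ and $J_{c_i}$ in the policy this supremum coincides with the optimal value $J_r(\pi^*)$ of Problem~\ref{Prob_CMDP} taken over all policies. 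The substantive content is the vanishing duality gap $\sup_\theta\inf_\lambda L_0 = \inf_\lambda\sup_\theta L_0$, which I would establish by passing to the occupancy-measure representation: writing $J_r$ and $J_{c_i}$ as linear functionals of the state-action occupancy measure, the set of realizable occupancy measures is a convex polytope cut out by the Bellman-flow constraints, so Problem~\ref{Prob_CMDP} becomes a convex program. Assumption~\ref{Assumption_Slater}, via $\tilde\pi$, supplies a strictly feasible point, and standard convex duality then yields zero duality gap and the existence of an optimal multiplier $\lambda^*$, mirroring \citet{paternain2019constrained}.

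For part (b), I would exploit the saddle-point inequality furnished by (a). Since $\lambda^*$ is an optimal dual variable, strong duality gives $J_r(\pi^*)=\sup_\theta L_0(\theta,\lambda^*)\ge L_0(\tilde\pi,\lambda^*)$ for the Slater policy $\tilde\pi$. Expanding the right-hand side,
\begin{equation*}
L_0(\tilde\pi,\lambda^*)=J_r(\tilde\pi)+\sum_{i=1}^N\lambda_i^*\bigl(d_i-J_{c_i}(\tilde\pi)\bigr)\ge J_r(\tilde\pi)+\xi\sum_{i=1}^N\lambda_i^*,
\end{equation*}
where the inequality uses $d_i-J_{c_i}(\tilde\pi)\ge\xi$ together with $\lambda_i^*\ge 0$. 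Rearranging yields $\sum_{i=1}^N\lambda_i^*\le (J_r(\pi^*)-J_r(\tilde\pi))/\xi$, which is the first bound. The second bound then follows from the reward normalization $r(s,a)\in[0,1]$: under it $0\le J_r(\pi)\le 1/(1-\gamma)$ for every policy, so $J_r(\pi^*)-J_r(\tilde\pi)\le 1/(1-\gamma)$, whence $\sum_{i=1}^N\lambda_i^*\le 1/((1-\gamma)\xi)$.

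The hard part will be the zero-duality-gap claim inside part (a), since $J_r(\theta)$ and $J_{c_i}(\theta)$ are non-convex in the policy parameters $\theta$, so strong duality does not follow merely from applying Slater's condition in parameter space. The resolution is that the non-convexity is only apparent: reparametrizing by the occupancy measure convexifies both the objective and the constraints, and Assumption~\ref{Assumption_full_policy_class} is precisely what licenses moving between the parameter space and the convex set of realizable occupancy measures without loss of optimality. Once convexity is secured, the remaining ingredients—the inner-infimum collapse, the Slater-based strong duality, and the multiplier estimate—are routine.
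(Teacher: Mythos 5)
Your proof is correct and matches the paper's in substance. For part (a) the paper simply cites \citet{paternain2019constrained}, whose argument is exactly the occupancy-measure convexification you sketch; for part (b) the paper runs the identical Slater-point computation, merely packaged through the sublevel set $\Lambda_a=\{\lambda\succcurlyeq 0:\sup_\theta L_0(\theta,\lambda)\le a\}$ with $a=J_r(\pi^*)$, which after unwinding is precisely your rearrangement $J_r(\pi^*)\ge L_0(\tilde\pi,\lambda^*)\ge J_r(\tilde\pi)+\xi\sum_{i=1}^N\lambda_i^*$.
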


The proof is in Appendix~\ref{Appendix_proof}.
Assumption~\ref{Assumption_full_policy_class} may seem stringent. 
However, it is necessary for the strong duality to hold \citep{paternain2019constrained}, which is of vital importance for the theoretical analysis of primal-dual type methods.
One may notice that \citet{ding2020natural} gave the convergence rate of the NPG-PrimalDual algorithm in the case that the strong duality does not hold (see Theorem 3 in \cite{ding2020natural}).
However, their theoretical analysis relies on the assumption that $\{\lambda^{(t)};t=1,...,T\}$ is a bounded sequence (see Assumption 4 in \cite{ding2020natural}), which is unlikely to be true as long as the strong duality does not hold.
Also, Assumption~\ref{Assumption_full_policy_class} does not implies $\epsilon_{bias}\equiv0$, please see Appendix~\ref{Appendix_example} for an example.

\subsection{Main results and discussions}
Now we present our main result.
The proof is in Appendix~\ref{Appendix_proof}.
\begin{thm}\label{Theorem_main}
Suppose Assumptions~\ref{Assumption_differentiable}-~\ref{Assumption_Slater} are true. $\{\theta^{(t)};t=1,...,T\}$ and $\{\lambda^{(t)};t=1,...,T\}$ are generated by the Fed-NPG algorithm with $\eta_\theta=O(1/N\sqrt{T}),\eta_\lambda=O(1/\sqrt{T})$. Then for the policy $\hat\pi$ returned by the FedNPG algorithm,
$$
\begin{aligned}
&\EB(J_r(\pi^*)-J_r(\hat\pi))=\EB\left[\frac{1}{T}\sum_{t=0}^{T-1}(J_r(\pi^*)-J_r(\pi_{\theta^{(t)}}))\right]\\
&=\widetilde O\left(\frac{|\AM|EN}{\sqrt{T}(1-\gamma)^3}\right)+\widetilde O\left(\frac{\sqrt{\epsilon_{bias}+d/K}}{(1-\gamma)^{3.5}}\right),\\
&\EB(J_{c_i}(\hat\pi)-d_i)=\sum_{i=1}^N\EB\left[\frac{1}{T}\sum_{t=0}^{T-1}J_{c_i}(\pi_{\theta^{(t)}})-d_i\right]_+\\
&=\widetilde O \left(\frac{|\AM|EN}{\sqrt{T}(1-\gamma)^3}\right)+\widetilde O\left(\frac{\sqrt{\epsilon_{bias}+d/K}}{(1-\gamma)^{2.5}}\right).\\
\end{aligned}
$$
$K$ is a hyper-parameter controlling the number of data points we collect at each iteration.
And the definition of $K$ can be found in Algorithm \ref{Alg_FedNPG} in Appendix \ref{app_ALG}.
\end{thm}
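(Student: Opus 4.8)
The plan is to run a primal--dual regret decomposition against the Lagrangian $L_0(\lambda,\theta)=J_r(\theta)+\langle\lambda,\,d-J_c(\theta)\rangle$ (writing $J_c=(J_{c_1},\dots,J_{c_N})^\top$ and $d=(d_1,\dots,d_N)^\top$) and then to convert the resulting saddle-point gap into the two stated bounds using the strong duality and the dual bound $\sum_i\lambda_i^*\le\frac{1}{(1-\gamma)\xi}$ from Lemma~\ref{Lemma_strong_duality_bounded_dualvar}. Concretely, for any $\lambda\succcurlyeq0$ I would split
\[
\frac{1}{T}\sum_{t=0}^{T-1}\big[L_0(\lambda^{(t)},\pi^*)-L_0(\lambda,\pi_{\theta^{(t)}})\big]
=\underbrace{\frac1T\sum_t\big[L_0(\lambda^{(t)},\pi^*)-L_0(\lambda^{(t)},\pi_{\theta^{(t)}})\big]}_{\text{primal regret}}
+\underbrace{\frac1T\sum_t\big[L_0(\lambda^{(t)},\pi_{\theta^{(t)}})-L_0(\lambda,\pi_{\theta^{(t)}})\big]}_{\text{dual regret}}.
\]
Since $\pi^*$ is feasible and $\lambda^{(t)}\succcurlyeq0$, the left side is at least $J_r(\pi^*)-\overline{J_r}+\langle\lambda,\,\overline{J_c}-d\rangle$ with $\overline{J_r}=\frac1T\sum_tJ_r(\pi_{\theta^{(t)}})$ and $\overline{J_c}=\frac1T\sum_tJ_c(\pi_{\theta^{(t)}})$. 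Taking $\lambda=0$ yields the reward-gap bound; taking $\lambda\propto[\overline{J_c}-d]_+$ with norm slightly above $\|\lambda^*\|$, together with the strong-duality estimate $\overline{J_r}\le J_r(\pi^*)+\|\lambda^*\|\,\|[\overline{J_c}-d]_+\|_2$ obtained from $J_r(\pi^*)=\sup_\theta L_0(\lambda^*,\theta)\ge L_0(\lambda^*,\pi_{\theta^{(t)}})$, isolates the constraint-violation norm. This last step is exactly where the boundedness in Lemma~\ref{Lemma_strong_duality_bounded_dualvar} is indispensable, since the comparator $\lambda$ must have controlled norm for the dual regret to stay small.

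\textbf{Dual regret.} The multiplier updates are $N$ independent projected scalar gradient steps with gradient $\hat J_{c_i}(\theta_i^{(t)})-d_i$, so a standard online-gradient-descent argument gives $\text{dual regret}\le\frac{\|\lambda-\lambda^{(0)}\|_2^2}{2\eta_\lambda T}+\frac{\eta_\lambda}{2T}\sum_t\|\hat J_c(\cdot)-d\|_2^2$ in expectation. The dual gradients are bounded by $O(1/(1-\gamma))$ per coordinate and the comparator above has norm $O(1/((1-\gamma)\xi))$, so with $\eta_\lambda=O(1/\sqrt T)$ this contributes $\widetilde O(1/\sqrt T)$ up to $(1-\gamma)$ factors. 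The mismatch between the local iterate $\theta_i^{(t)}$ at which the dual gradient is evaluated and the aggregated policy, as well as the policy-evaluation error $\hat J_{c_i}-J_{c_i}$, are absorbed into the drift and estimation terms handled next.

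\textbf{Primal regret and the federated drift.} This is the heart of the argument and where I expect the main difficulty. I would build on the sample-based NPG ascent lemma of \citet{Agarwal2021jmlr}: for softmax-type updates $\theta\leftarrow\theta+\eta_\theta\hat w$ driven by compatible function approximation, smoothness (Assumption~\ref{Assumption_smooth}) yields a per-step inequality telescoping $\mathrm{KL}(\pi^*\,\|\,\pi_{\theta^{(t)}})$ against the advantage gap of the time-varying objective $\frac1Nr-\lambda_i^{(t)}c_i$, at the price of an approximation floor $O(\sqrt{\epsilon_{bias}})$ from Assumption~\ref{Assumption_func_approx_err} and an estimation error $O(\sqrt{d/K})$ coming from the SGD solutions $\hat w_r^{(t)}(i),\hat w_{c_i}^{(t)}$ under the second-moment bound $W^2$ of Assumption~\ref{Assumption_est_err}. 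The genuinely new ingredient is that gradients are formed at the $N$ \emph{local} iterates $\theta_i^{(t)}$, which drift apart for $E$ steps before being averaged to $\theta^{(t)}=\frac1N\sum_i\theta_i^{(t)}$. I would bound the per-round deviation $\|\theta_i^{(t)}-\frac1N\sum_j\theta_j^{(t)}\|_2=O(\eta_\theta E\cdot\mathrm{poly}(\tfrac1{1-\gamma}))$ via the Lipschitz and smoothness bounds (Assumptions~\ref{Assumption_Lipschitz}--\ref{Assumption_smooth}), and then transfer it to a bound on $\big|\log\pi_{\theta^{(t)}}-\tfrac1N\sum_i\log\pi_{\theta_i^{(t)}}\big|$ through a second-order expansion controlled by $\beta$. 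Summing the telescoped KL inequalities over a round and over rounds, this drift contributes an additive term scaling like $\eta_\theta EN$, which is the source of the $|\AM|EN$ numerator.

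\textbf{Putting it together.} Combining the primal regret bound of the form $\widetilde O\big(\frac{1}{\eta_\theta T}+\eta_\theta\,\mathrm{poly}(\tfrac1{1-\gamma})+\eta_\theta EN+\sqrt{\epsilon_{bias}}+\sqrt{d/K}\big)$ with the dual regret, and choosing $\eta_\theta=O(1/(N\sqrt T))$ and $\eta_\lambda=O(1/\sqrt T)$, balances the $1/(\eta_\theta T)$ and $\eta_\theta(\cdot)$ contributions and yields the vanishing $\widetilde O(|\AM|EN/\sqrt T)$ optimization rate; the non-vanishing $\widetilde O(\sqrt{\epsilon_{bias}+d/K})$ term is the irreducible approximation/estimation floor. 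The differing powers $(1-\gamma)^{-3.5}$ and $(1-\gamma)^{-2.5}$ for the reward and constraint bounds arise from the extra distribution-mismatch and $\|\lambda^*\|$ factors incurred when converting the saddle gap into the constraint-violation norm rather than the plain reward gap. The chief obstacle remains the federated-drift step: one must verify that averaging in parameter space after $E$ local NPG steps perturbs the centralized telescoping argument only by $O(\eta_\theta EN)$, which is precisely what the smoothness assumption buys when passing from parameter averaging to log-policy averaging.
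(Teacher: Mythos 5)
Your proposal is correct and follows essentially the same route as the paper's proof: a KL-potential telescoping argument for the sample-based NPG primal updates (with the $\sqrt{\epsilon_{bias}}$ and $\sqrt{d/K}$ floors from compatible function approximation and SGD), an online-gradient-descent regret bound for the dual updates, parameter-drift bounds of order $\eta_\theta E$ between local iterates and their average transferred through the smoothness assumption, and a final conversion of the saddle gap into reward and constraint-violation bounds via strong duality and the bounded multiplier $\sum_i\lambda_i^*\leq\frac{1}{(1-\gamma)\xi}$. The only cosmetic difference is your choice of dual comparator $\lambda\propto[\overline{J_c}-d]_+$, whereas the paper picks $\lambda_i\in\{0,\tfrac{2}{(1-\gamma)\xi}\}$ coordinatewise and invokes its Lemma~\ref{Lemma_optimization_constraint}; these implement the same conversion.
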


\textbf{\textit{Remark}} 1: 
Even if we are allowed to perform an infinitely large number of iterations i.e. $T=\infty$, the error bound in Theorem \ref{Theorem_main} still remains non-zero.
There are two reasons for this phenomenon:
\begin{itemize}
    \item our parameterization introduces inherent biases, which are measured by the transferred compatible function approximation errors;

    \item the natural policy gradient we use is not obtained with an oracle but estimated from a limited number of data points.
\end{itemize}
The meaning of the $\widetilde O(1/\sqrt{T})$ convergence rate of the FedNPG algorithm is that the excess risk would converge to $0$ with an $\widetilde O(1/\sqrt{T})$ rate if our parameterized policy class admits no transferred compatible function approximation errors and an oracle for exact natural policy gradients is available.
Such dependence on $T$ matches error bounds of single-agent algorithm for constrained reinforcement learning \cite{ding2020natural}, as well as results of federated reinforcement learning algorithms in non-constrained cases \cite{jin2022federated}.

\textbf{\textit{Remark}} 2:
Unlike many existing works on federated learning, our finite-sample bounds contain a $O(N)$ factor, meaning that when the number of agent $N$ increases our algorithm would take more time to converge.
Traditional federated settingi considers the average of $N$ \emph{averaged} heterogeneous objectives, while our setting
focuses on $N$ distinct heterogeneous constraints in addition to a homogeneous objective without averaging on these constraint signals.
Moreover, it is worthy to note that even in the setting of constrained RL with an omniscient agent having information of both reward functions and $N$ cost functions, the finite-sample convergence rate will still scale up with $N$ \cite{liu2021policy, li2021faster, zeng2022finite}.

\section{Empirical Study}
In this section, we evaluate the training performance of {FedNPG} and {FedPPO}, in which the policy gradient optimizers used in local updates are respectively natural policy gradient (NPG) and proximal policy optimization (PPO) \footnote{See https://github.com/grandpahao/FedCMDP.git}.

\subsection{The Set-up}
\textbf{Environments.} 
We construct a collection of FedRL tasks with multiple constraints represented by different cost functions in both tabular and non-tabular environments.
We consider two types of tabular environments: {RandomMDP} with randomly generated matrices as different cost functions; {WindyCliff} \citep{paul2019fingerprint} with a sequence of hazard spaces to which the entrance induces a cost.
In terms of non-tabular environments, we modify several classical learning problems in  Gym \citep{1606.01540} as follows: learning agents of {CartPole} and {Inverted-Pendulum} are penalized when their horizontal positions fall into certain regions, and certain actions of agents in {Acrobot} are prohibited at a certain range of states.

\textbf{Policy Parameterization.}
We apply two ways of policy parameterization in instantiating our methods according to the type of environment.
In tabular environments, we use softmax parameterization.
In terms of the policy aggregation $\texttt{Aggregate}_\theta$, we apply an averaging strategy on the level of policy as follows:
\begin{align*}
    \bar{\pi}_{t}(a|s)=\frac{1}{N}\sum_{i=1}^N\pi_{\theta_i^{(t)}}(a|s),\\
    {\theta}^{(t)}(a|s) = \log \bar{\pi}_t(a|s)+C_s,
\end{align*}
where $C_s$ is taken as $\sum_{a\in\SM} \log\bar{\pi}_t(a|s)$ in our methods.
In non-tabular environments, the policy $\pi_\theta$ is parameterized with deep neural networks.
In terms of the policy aggregation, the averaging policy is conducted on the level of deep models, \ie, $~{\theta}^{(t)}=\frac{1}{N}\sum_{i=1}^N \theta_i^{(t)}$.

\textbf{Baselines.} 
For each environment, our methods are compared with two types of baselines: firstly, RL agents trained on locally collected experience without communication, which we refer to  as {NPG}$_k$ ({PPO}$_k$) in the $k$-th environment; secondly, an omniscient agent trained on trajectories with information of both reward functions and $N$ cost functions, which we refer to as {NPG}$_o$ ({PPO}$_o$).
Comparison with the first-type baselines is to show that no local agent is able to independently solve the learning problem with multiple constraints, while the second-type baseline gives us a perspective on how well our methods perform in solving the constrained learning problem.

\textbf{Comparison Metric.}
In most of involved FedRL tasks, we directly display averaged training performances of different algorithms in terms of both reward function and cost functions, and compare them with constraint thresholds.
However, such comparison is infeasible in randomly generated {RandomMDP} environments, because reward functions and cost functions vary among different instances.
Instead, we introduce three auxiliary ratios which unify such misalignment in different {RandomMDP} environments, \ie, Reward Ratio (\textit{RR}), maximum Violation Ratio (\textit{mVR}) and maximum Relative Violation Ratio (\textit{mRVR}), which are defined as follows: 
\begin{align*}
    \textit{RR}(\pi)&=J_r(\pi)/J_r(\pi_o),\\
    \textit{mVR}(\pi) &= \max_{i\in[N]} J_{c_i}(\pi)/d_i,\\
    \textit{mRVR}(\pi)&=\max_{i\in[N]} J_{c_i}(\pi)/J_{c_i}(\pi_o),
\end{align*}
where $\pi$ represents any convergent policy to be evaluated and $\pi_o$ represents the convergent policy of the corresponding second-type baseline with full access to the $N$ constraints.

\textbf{Other Details.} In terms of other experiment settings on environment construction and hyperparameter selection, we leave a detailed description in Appendix \ref{app_EXP}.
\begin{figure*}[!htb]
    \centering
    \includegraphics[width=0.9\textwidth]{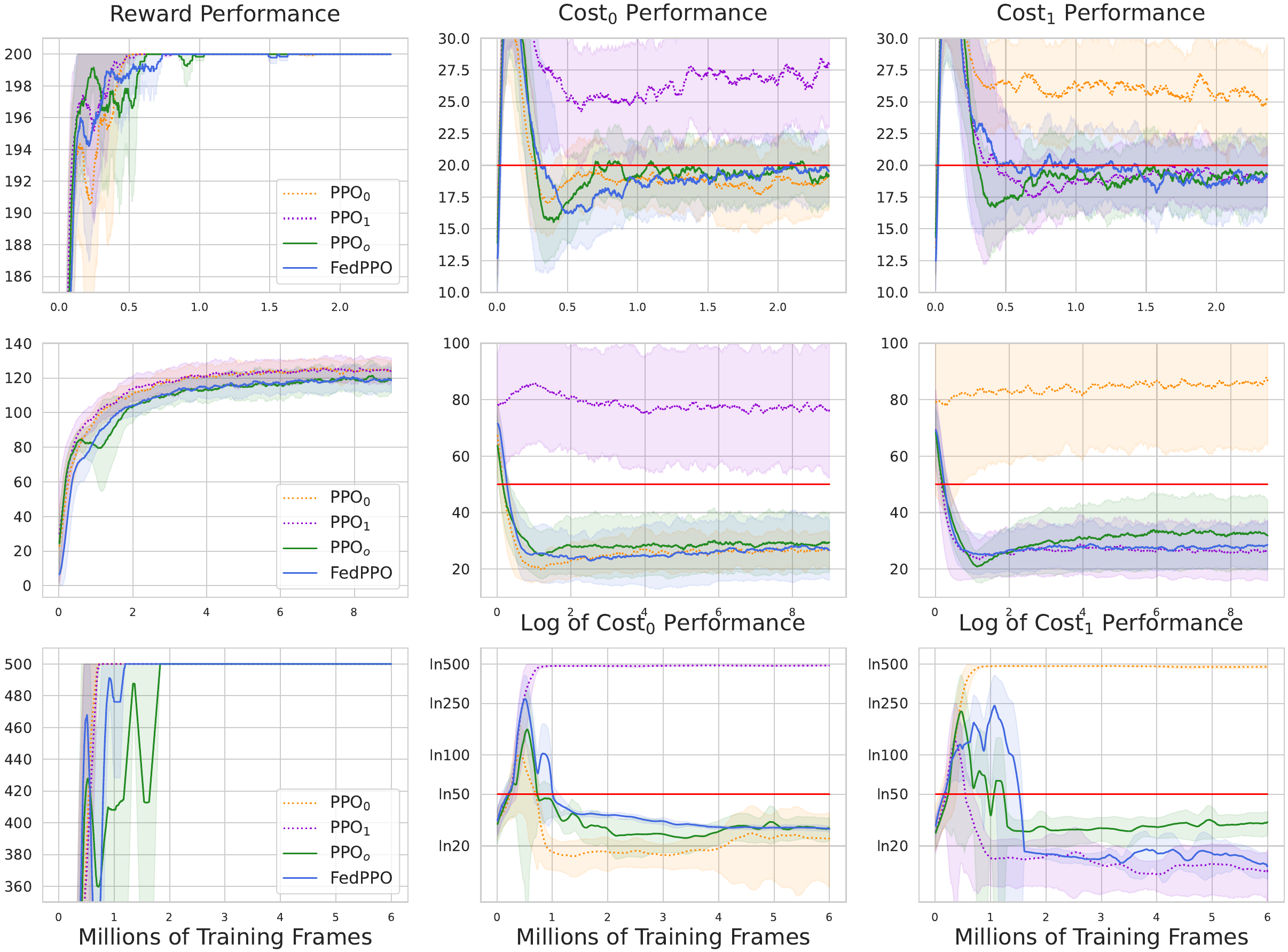}

    \caption{Comparison between baselines and FedPPO in {CartPole} (first row), {Acrobot} (second row) and {Inverted-Pendulum} (third row): we depict the mean as line, standard error as shadow, and constraint thresholds as red lines.}
    \label{fig:deep_exp}
\end{figure*}

\subsection{Performance of FedNPG}
To justify the convergence of FedNPG, we evaluate its performance in FedRL tasks with tabular environments, \ie, {RandomMDP} and {WindyCliff}.
The first row of Figure \ref{fig:tabular_exp} reveals that FedNPG and NPG$_o$ achieve similar performance in terms of both reward performance and cost violation, while NPG$_k$ fails at satisfying all constraints.
In WindyCliff, the second row of Figure \ref{fig:tabular_exp} tells us that FedNPG is comparable with NPG$_o$ in terms of reward performance and satisfies all the constraints with a smaller variance.

\subsection{Performance of FedPPO}
FedPPO utilizes deep neural networks to approximate policy functions and value functions, and we evaluate its performance on FedRL tasks in CartPole, Acrobot and Inverted-Pendulum.
Different environments justify the efficiency of FedPPO from different perspectives: CartPole focuses on the discrete control with state-based constraints; Acrobot additionally considers state-action-based constraints; Inverted-Pendulum focuses on agents with continuous action space. 
In CartPole and Inverted-Pendulum, Figure \ref{fig:deep_exp} reveals that all the methods achieve maximum reward and only convergent performances of PPO$_{o}$ and FedPPO satisfy constraints specified by red lines. 
It is worth noting that PPO$_k$ indeed satisfies the $k$-th constraint but fails in the other constraint.
Such phenomenon is more obviously observed in Inverted-Pendulum.
In Acrobot, FedPPO achieves comparable reward performance with PPO$_o$ and satisfies all the constraints, while PPO$_k$ obviously violates the unobservable constraint.

\section{Conclusion}
We have imposed constraint heterogeneity into FedRL problems, where agents have access to different constraints and the learned policy is expected to satisfy all constraints.
Through constructing local Lagrange functions, agents are able to conduct local updates without knowledge of others' experience.
Together with periodical communication of policies, we have proposed federated primal-dual policy optimization methods to solve FedRL problems with constraint heterogeneity.
Moreover, we have furthermore analyzed two instances of our methods: FedNPG and FedPPO.
FedNPG is proved to achieve an $\tilde{O}(1/\sqrt{T})$ convergence rate, enjoy reasonable sample complexity compared with existing works on constrained reinforcement learning, and achieve performance comparable with an omniscient agent having access to all constraint signals in tabular FedRL tasks.
FedPPO is evaluated on complicated tasks with neural networks as function approximators, and manages to find optimal policies simultaneously satisfying constraints distributed in different agents.

\bibliographystyle{unsrtnat}
\bibliography{references}  

\appendix
\newpage
\section{Implementations for \textbf{FedNPG} and \textbf{FedPPO} }
\label{app_ALG}
Here we display detailed implementations for federated primal-dual natural policy gradient {FedNPG} and federated primal-dual proximal policy optimization {FedPPO}.
\subsection{Federated primal-dual natural policy gradient (FedNPG)}
In {FedNPG}, we apply sampling procedure in Algorithm 3 of \cite{agarwal2019theory} to estimate value functions and leave choices of $\texttt{Aggregate}_{\theta}(\cdot)$ unspecified for different ways of parameterization. 
In addition, we use $\pi_i^{(t)}$ in short of $\pi_{\theta_i^{(t)}}$.
\begin{algorithm}[!htb]
\small
    \caption{Federated primal-dual Natural Policy Gradient (FedNPG)}
    \label{Alg_FedNPG}
    \begin{algorithmic}
        \STATE \textbf{Initialization:} Initial parameters $\theta^{(0)}$ and multipliers $\lambda^{(0)}$; Learning rates $\eta_\theta$, $\eta_\lambda$, $\alpha$; Number of sampled trajectories $K$; Projection sets $\Lambda$ and $\Theta$.
        \STATE Set $t=0$.
        \STATE Set $\lambda_i^{(t)}=\lambda^{(t)},~i\in\{1,\dots,N\}$.
        \WHILE{$t<T$}
            \STATE Set $\theta_i^{(t)}=\theta^{(t)},~i\in\{1,\dots,N\}$.
            \FOR{ $e=1$ {\bfseries to} $E$}
                \FOR{ $i=1$ {\bfseries to} $N$}
                \STATE Set $\widehat{V}^{(t)}_{c_i}(\rho)=0$.
                    \FOR{ $k=0$ {\bfseries to} $K-1$}
                        \STATE Draw $(s,a)\sim \nu^{(t)}_i$, with $\nu^{(t)}_i(s,a)=d^{\pi^{(t)}_i}(s)\pi^{(t)}_i(a|s)$.
                        \STATE Draw $L\sim \text{Geometry}(1-\gamma)$ and execute policy $\pi^{(t)}_i$ from $(s,a)$ for $L$ steps, then construct estimators as $$
                       \widehat Q^{(t)}_r(s,a)=\sum^{L}_{l=0} r(s_l,a_l),\ \widehat Q^{(t)}_{c_i}(s,a)=\sum^{L}_{l=0} c_i(s_l,a_l), \text{where } (s_0,a_0)=(s,a).$$
                        \STATE Draw $L\sim \text{Geometry}(1-\gamma)$ and execute policy $\pi^{(t)}_i$ from $s$ for $L$ steps, then construct estimators as $$\widehat V^{(t)}_r(s)=\sum^{L}_{l=0} r(s_k,a_k),\ \widehat V^{(t)}_{c_i}(s)=\sum^{L}_{l=0} c_i(s_l,a_l), \text{where } s_0=s.$$
                        
                        \STATE Set $\widehat A_r^{(t)}(s,a)=\widehat Q^{(t)}_r(s,a)-\widehat V_r^{(t)}(s)$ and $\widehat A_{c_i}^{(t)}(s,a)=\widehat Q^{(t)}_{c_i}(s,a)-\widehat V_{c_i}^{(t)}(s)$.
                        \STATE Update $w_r^{(k+1)}(i)=w_r^{(k)}(i)-\alpha G_r^{(k)}(i)$, $w_{c_i}^{(k+1)}=w_{c_i}^{(k)}-\alpha G_{c_i}^{(k)}$ with $$
                       \begin{aligned}
                          G_r^{(k)}(i)&=2({w_r^{(k)}(i)}^\top\nabla_\theta\log\pi^{(t)}_i(a|s)-\widehat A_r^{(t)}(s,a))\nabla_\theta\log\pi^{(t)}_i(a|s),\\ G_{c_i}^{(k)}&=2({w_{c_i}^{(k)}}^\top\nabla_\theta\log\pi_i^{(t)}(a|s)-\widehat A_{c_i}^{(t)}(s,a))\nabla_\theta\log\pi^{(t)}_i(a|s)
                       \end{aligned}$$
                       \STATE Draw $s\sim \rho$, $L\sim\text{Geometry}(1-\gamma)$ and execute policy $\pi_i^{(t)}$ from $s$ for $L$ steps, then update $\widehat{V}_{c_i}(\rho)$ as
                       $$
                       \widehat{V}^{(t)}_{c_i}(\rho)=\widehat{V}^{(t)}_{c_i}(\rho)+\frac{1}{K}\sum_{l=0}^{L-1}c_i(s_l,a_l),\ \text{where\ }s_0=s. 
                       $$
                    \ENDFOR
                    \STATE Set $\hat w_r^{(t)}(i)=\frac{1}{K}\sum_{k=1}^K w_r^{(k)}(i)$, $\hat w_{c_i}^{(t)}=\frac{1}{K}\sum_{k=1}^K w_{c_i}^{(k)}$, $\hat w^{(t)}(i)= \hat w_r^{(t)}(i)/N-\lambda_i^{(t)}\hat w_{c_i}^{(t)}$.
                    \STATE Update parameters and multipliers as $$\theta^{(t+1)}_i=\operatorname{Proj}_{\Theta}(\theta^{(t)}_i+\eta_\theta \hat w^{(t)}(i)),\ \lambda_i^{(t+1)}=\operatorname{Proj}_{\Lambda}(\lambda_i^{(t)}-\eta_{\lambda}(d_i-\widehat V_{c_i}^{(t)}(\rho))).$$
                \ENDFOR
                \STATE Set $t=t+1$.
            \ENDFOR
            \STATE $N$ agents communicate $\{\theta_i^{(t)}\}_{i=1}^N$ and $\{\lambda_i^{(t)}\}_{i=1}^N$.
            \STATE Set $\theta^{(t)} = \texttt{Aggregate}_\theta(\{\lambda_i^{(t)}\}_{i=1}^N,\{\theta_i^{(t)}\}_{i=1}^N)$.
        \ENDWHILE  
        \STATE \textbf{RETURN} $\hat\pi=\pi_{\hat\theta}$, where $\hat\theta\sim\operatorname{Unif}(\{\theta^{(1)},...,\theta^{(T)}\})$.


    \end{algorithmic}
\end{algorithm}
\subsection{Federated primal-dual proximal policy optimization (FedPPO)}
{FedPPO} is implemented for complicated tasks with large state space or continuous action space.
In these cases, an policy $\pi$ is usually parameterized with a deep neural network $\theta$.
In addition to the policy network, {FedPPO} also utilizes deep neural networks $\phi,\psi$ to estimate value functions for both reward signals, \ie $~V_\phi$, and cost signals, \ie $~V_\psi$.
It is noteworthy that $V_\psi$ contains private information of constraint signals and is prohibited from any communication.
Value networks $V_\phi$ of reward functions is communicated along with policy networks $\pi_\theta$ and $\texttt{Aggregate}_\phi$ has the same formulation with $\texttt{Aggregate}_\theta$.
\begin{algorithm}[!htb]
\label{Alg_FedPPO}
    \caption{Federated primal-dual Proximal Policy Optimization}
    \begin{algorithmic}
    \STATE \textbf{Initialization:} Initial policy parameters $\theta^{(0)}$, critic parameters $(\phi^{(0)}$,$~\psi^{(0)})$, multipliers $\lambda^{(0)}$; Learning rates $\eta_\theta$, $\eta_\lambda$, $\eta_\phi$; Length of sampled trajectory $K$; Number of inner iterations $K_{in}$; Projection sets $\Lambda$ and $\Theta$.
    \STATE Set $t=0$.
    \STATE Set $\lambda_i^{(t)}=\lambda^{(t)},~\psi_i^{(0)}=\psi^{(0)},~i\in\{1,\dots,N\}$.
    \WHILE{ $t<T$ }
        \STATE Set $\theta_i^{(t)}=\theta^{(t)},\phi_i^{(t)}=\phi^{(t)},~i\in\{1,\dots,N\}$.
        \FOR{ $e=1$ {\bfseries to} $E$ }
            \FOR{ $i=1$ {\bfseries to} $N$ }
                \STATE Collect a trajectory of length $\gT_{k}^t=\{(s^l,a^l,r^l,c^l_i,d^l)\}_{l=0}^{K-1}$ following $\pi_{\theta_i^{(t)}}$.
                \STATE Compute reward-to-go $\{R^l\}_{l=0}^{K-1}$ and cost-to-go $\{C^l_i\}_{l=0}^{K-1}$.
                \STATE Compute averaged cost of one episode: $\hat{J}_{C_i}=(\sum_{l=0}^{K-1}C_i^l\mathds{1}_{d^l=1})/(\sum_{l=0}^{K-1}\mathds{1}_{d^l=1})$.
                \STATE Take one-step gradient descent w.r.t. $\lambda$: $\lambda_i^{(t+1)}=\operatorname{Proj}_\Lambda(\lambda_i^{(t)}-\eta_\lambda(d_i-\hat{J}_{C_i}))$.
                \STATE Compute squared errors of $V_{\phi_{j-1}}$ and $V_{\psi_{j-1}}$: 
                    $$
                    Error_R(\phi) = \frac{1}{K}\sum_{l=0}^{K-1}(R^l-V_\phi(s^l))^2,~Error_{C_i}(\psi) = \frac{1}{K}\sum_{l=0}^{K-1}(C_i^l-V_\psi(s^l))^2.$$
                \STATE
                \STATE Compute advantages of both reward and cost functions with local critic networks $V_{\phi_i^{t}}$ and $V_{\psi_i^{t}}$:
                $$
                    A_r^l=R^l-V_{\phi_i^{(t)}}(s^l),~A_{c_i}^l=C^l_i-V_{\psi_i^{(t)}}(s^l),~\forall l\in\{0,\dots,K-1\}.
                $$
                \STATE Take one-step gradient descent w.r.t. $\phi$: $\phi_i^{(t+1)}=\phi_i^{(t)}-\eta_\phi\frac{\partial_\phi Error_R(\phi)}{\partial_\phi}\big |_{\phi=\phi_i^{(t)}}$.
                \STATE Take one-step gradient descent w.r.t. $\psi$: $\psi_i^{(t+1)}=\psi_i^{(t)}-\eta_\psi\frac{\partial_\psi Error_{C_i}(\psi)}{\partial_\psi}\big |_{\psi=\psi_i^{(t)}}$.
                \STATE Compute advantages of local Lagrange function $A_L^l=A_r^l/N-\lambda_{i}^{(t)}A_{c_i}^l,~\forall l\in\{0,\dots,K-1\}$.
                \STATE Set $\theta_0=\theta_{i}^{(t)}$.
                \FOR{ $j=1$ {\bfseries to} $K_{in}$ }
                    \STATE Construct the PPO-clip objective: $$Clip(\theta)=\sum_{l=0}^{K-1}\min\left ( \frac{\pi_{\theta}(a_l|s_l)}{\pi_{\theta_i^{(t)}}(a_l|s_l)}A_L^l,\max\left ((1-\epsilon)A_L^l,~(1+\epsilon)A_L^l\right )~\right ).
                    $$
                    \STATE Take one-step gradient ascent w.r.t. policy parameters: $\theta_j = \operatorname{Proj}_{\Theta}\left (\theta_{j-1}+\eta_\theta\frac{\partial_\theta Clip(\theta)}{\partial_\theta}\big |_{\theta=\theta_{j-1}} \right )$.
                    
                \ENDFOR
                \STATE Set $\theta_{i}^{(t+1)}=\theta_{K_{in}}$.
                
            \ENDFOR
            \STATE $t=t+1$
        \ENDFOR
        \STATE $N$ agents communicate $\{(\theta_i^{(t)},\phi_i^{(t)})\}_{i=1}^N$ and $\{\lambda_i^{(t)}\}_{i=1}^N$.
        \STATE Set $\theta^{(t)} = \texttt{Aggregate}_\theta(\{\lambda_i^{(t)}\}_{i=1}^N,\{\theta_i^{(t)}\}_{i=1}^N)$ and $\phi^{(t)} = \texttt{Aggregate}_\phi(\{\lambda_i^{(t)}\}_{i=1}^N,\{\phi_i^{(t)}\}_{i=1}^N)$.
    \ENDWHILE
    \end{algorithmic}
\end{algorithm}

\section{Detailed Experiment Settings}
\label{app_EXP}
\subsection{Environment Construction}
\paragraph{RandomMDP}
An instance of {RandomMDP} is composed of a randomly generated transition dynamic, a randomly generated reward function and $N$ randomly generated cost functions.
The constraint threshold is determined with a randomly generated anchor policy: its performances w.r.t. $N$ cost functions multiplied by a hardness factor $\nu<1$ are set to be threshold values of the $N$ constraints.
In our case, we set $|\SM|=3$, $|\AM|=5$, $N=4$ and $\nu=0.7$.

\paragraph{WindyCliff}  
{WindyCliff} is a modified version of a classical example in \cite{sutton1998introduction}: Cliff Walking.
In {WindyCliff}, the agent is expected to walk from the left-bottom grid to the right-bottom with hazard regions known as the cliff zone, and there is a blowing wind in the environment with probability $\theta\leq 1$ of forcing the agent to move regardless of its action.
In our case, the gridworld size is set to be $4\times 10$, the agent can move to any adjacent grid in four directions (stay unmoved at the move beyond boundaries), there are three hazard zones notated as $Z_1,Z_2,Z_3$ and there is a wind blowing to the bottom of our gridworld. 
\begin{figure}[!htb]
    \centering
    \includegraphics[width=0.5\textwidth]{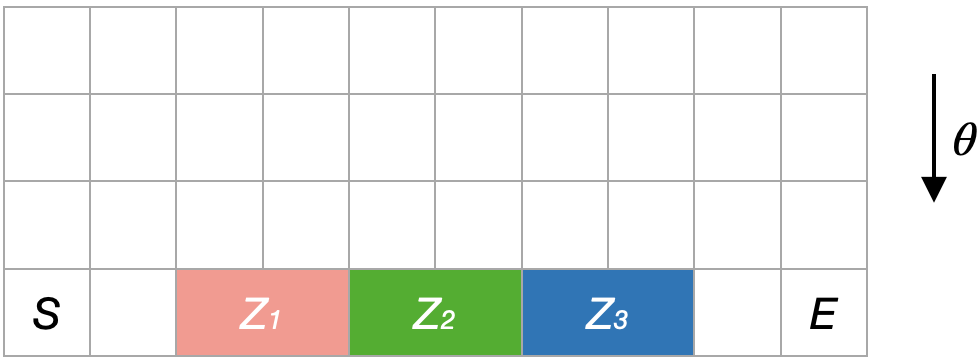}
    \label{fig:app_WindyCliff}
    \caption{The gridworld in \textbf{WindyCliff} with size of $4\times 10$.}
\end{figure}
\newline
In terms of the reward function, we reward the agent with 20 at reaching $E$, while it receives a negative reward of $-1$ at reaching the other grids.
In terms of constraints, we introduce three cost functions corresponding to three hazard zones: the $k$-th cost function generates a cost of $10$ at reaching the grid in $Z_k$, and the threshold of these cost functions is uniformly set to be $1.5$.
In other words, the agent satisfying these three cost functions are prohibited from reaching any grid in hazard zones.
Moreover, the wind intensity is set to be $\theta=0.4$ in our instance.

\paragraph{CartPole} In the original setting of CartPole \cite{1606.01540}, the block is horizontally restricted in a range of $[-4.8,4.8]$ and the agent is rewarded when the pole is kept upright.
Keeping the reward function unchanged, we introduce two hazard zones for the horizontal position of the block: 
\begin{align*}
    Z_1&=[-2.4, -2.3]\cup[-1.3, -1.2]\cup[-0.1, 0.0]\cup[1.1, 1.2]\cup[2.2, 2.3],\\
    Z_2&=[-2.3, -2.2]\cup[-1.2, -1.1]\cup[0.0, 0.1]\cup[1.2, 1.3]\cup[2.3, 2.4].
\end{align*}
In terms of constraints, the $k$-th cost function generates a cost of $1$ when the horizontal position of the block falls in $Z_k$ and constraint budgets are both set to be $20$.

\paragraph{Acrobot}
In our setting, the agent is rewarded with $1.0$ when the free end achieves the target height $H$, and it is otherwise rewarded with $0.001(s_h-H)$ when the height of free end is $s_h$.
In terms of constraints, we introduce two cost functions as follows:
$$
C_1(s,a)=\mathds{1}_{\{s_{\theta_1}\in[-\pi/2,0.0],a=\text{push}\}},~C_2(s,a)=\mathds{1}_{\{s_{\theta_1}\in[-\pi/2,0.0],a=\text{pull}\}},
$$
where $s_{\theta_1}$ represents the angle of the first joint.
Budgets of these constraints are set to be $40$.
In other words, agents satisfying these two actions are not expected to take any action, \ie $~a=\text{null}$, when the first joint falls in the leftbottom region.

\paragraph{InvertedPendulum} In the original setting of {InvertedPendulum} \cite{1606.01540}, the agent is rewarded when the pole is kept upright and there is no limit on the horizontal range of the block.
Keeping the reward function unchanged, we firstly restrict the horizontal range of the block within $[-2.4,2.4]$ and then introduce two hazard zones: 
\begin{align*}
    Z_1&=[-2.4, -2.0]\cup[-1.3, -0.8]\cup[-0.2, 0.2]\cup[0.8, 1.3]\cup[2.0, 2.4],\\
    Z_2&=[-1.9, -1.3]\cup[-0.5, -0.2]\cup[0.2, 0.5]\cup[1.3, 1.9].
\end{align*}
In terms of constraints, the $k$-th cost function generates a cost of $1$ when the horizontal position of the block falls in $Z_k$ and constraint budgets are both set to be $20$.

\subsection{Hyperparameter Selection}
\paragraph{Network structure} In tasks of {CartPole}, {Acrobot} and {InvertedPendulum}, we approximate policy functions and value functions with deep neural networks.
In {CartPole} and {Acrobot}, both policy functions and value functions utilize neural networks with two hidden layers of size $(64,64)$.
In {InvertedPendulum}, both policy functions and value functions utilize neural networks with two hidden layers of size $(256,256)$.

\paragraph{Learning rates} 
In {RandomMDP}, $(\eta_\theta,\eta_\lambda)$ are set to be $(1e-3,1e-3)$.
In {WindyCliff}, $(\eta_\theta,\eta_\lambda)$ are set to be $(3e-4,3e-4)$.
In {CartPole}, {Acrobot} and {InvertedPendulum}, $(\eta_\theta,\eta_\psi,\eta_\phi,\eta_\lambda)$ are set to be $(1e-4,1e-4,1e-4,1e-3)$.

\paragraph{Projection set $\Lambda$} In {RandomMDP} and {WindyCliff}, we set $\Lambda=[0,10]$.
In {CartPole} and {InvertedPendulum}, we set $\Lambda=[0,1]$.
In {Acrobot}, we set $\Lambda=[0,2]$.

\paragraph{Other hyperparameters}
In {RandomMDP} and {WindyCliff}, we set $E=5$ and $K=10$.
In the other tasks, we set $E=1$ and $K=10000$, which indicates that any agent performs at least $10$ local updates between every two communication rounds.

\section{Notations in the Proofs}
In this section, we introduce some useful notations that do not appear in the main paper.
For a CMDP $\langle \SM,\AM,r,\{(c_i,d_i)\}_{i=1}^N,\gamma,\PM\rangle$, we define the value function, state-action value function and advantage function w.r.t the reward $r$ as follows:
$$
\begin{aligned}
V_r^\pi(s) &=\EB\left[\sum_{t=0}^\infty\gamma^t r(s_t,a_t)\mathrel{\bigg|} s_0=s,a_t\sim\pi(\cdot|s_t)\right], \\
Q_r^\pi(s,a)&=\EB\left[\sum_{t=0}^\infty\gamma^t r(s_t,a_t)\mathrel{\bigg|} s_0=s,a_0=a,a_t\sim\pi(\cdot|s_t)\ \text{for}\ {t\geq 1}\right],\\
A^{\pi}_r(s,a)&=Q^{\pi}_r(s,a)-V^\pi_r(s).
\end{aligned}
$$
Given an initial distribution $\rho$, we also define $V_r^\pi(\rho)=\EB_{s\sim\rho}V_r^\pi(s)$.
We define the value function, state-action value function and advantage function w.r.t the $i$th cost function $c_i$ as $V_{c_i}^\pi(s)$, $Q_{c_i}^\pi(s,a)$ and $A^{\pi}_{c_i}(s,a)$ in a similar manner.
Given an initial distribution $\rho$, we also define $V_{c_i}^\pi(\rho)=\EB_{s\sim\rho}V_{c_i}^\pi(s)$.
For simplicity of notations, we use $\pi^{(t)}$ in short for $\pi_{\theta^{(t)}}$, $\pi_i^{(t)}$ in short for $\pi_{\theta_i^{(t)}}$ and $\pi^{[t]}$ in short of $\pi_{\bar\theta^{(t)}}$.
We also use $V_\diamond^{(t)}$, $V_\diamond^{(t)_i}$, $V_\diamond^{[t]}$ and $V_\diamond^{*}$ in short for $V_\diamond^{\pi^{(t)}}$, $V_\diamond^{\pi_i^{(t)}}$, $V_\diamond^{\pi^{[t]}}$ and $V_\diamond^{\pi^*}$ respectively.
Here $\diamond$ represents either the reward $r$ or the $i$th cost function $c_i$.
$Q_\diamond^{(t)}$, $Q_\diamond^{(t)_i}$, $Q_\diamond^{[t]}$, $Q_\diamond^{*}$, $A_\diamond^{(t)}$, $A_\diamond^{(t)_i}$, $A_\diamond^{[t]}$ and $A_\diamond^{*}$ are defined similarly.

\section{Auxiliary Lemmas}
\begin{lem}[Performance difference lemma]\label{Lemma_performance_difference}
For any state $s\in \gS$, any stationary policy $\pi,\pi^\prime$, we have
$$
V^{\pi}_\diamond\left(s_{0}\right)-V^{\pi^{\prime}}_\diamond\left(s_{0}\right)=\frac{1}{1-\gamma} \mathbb{E}_{s \sim d_{s_{0}}^{\pi}}\left[\mathbb{E}_{a \sim \pi(\cdot \mid s)} A_\diamond^{\pi^{\prime}}(s, a)\right].
$$
\end{lem}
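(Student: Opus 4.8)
The plan is to establish the identity by the classical telescoping argument of Kakade and Langford, organized into two steps. First I would rewrite the value gap $V^\pi_\diamond(s_0)-V^{\pi'}_\diamond(s_0)$ as the discounted sum of the advantages $A^{\pi'}_\diamond$ accumulated along a trajectory generated by $\pi$; then I would fold that time-indexed sum into a single expectation over the discounted state-visitation measure $d_{s_0}^\pi$, which is precisely what produces the $\tfrac{1}{1-\gamma}$ prefactor on the right-hand side.

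For the first step, I would begin from the one-step decomposition
$$
A^{\pi'}_\diamond(s,a)=\diamond(s,a)+\gamma\,\EB_{s'\sim\PM(\cdot|s,a)}\big[V^{\pi'}_\diamond(s')\big]-V^{\pi'}_\diamond(s),
$$
which follows at once from $A^{\pi'}_\diamond=Q^{\pi'}_\diamond-V^{\pi'}_\diamond$ together with the Bellman expansion $Q^{\pi'}_\diamond(s,a)=\diamond(s,a)+\gamma\,\EB_{s'}[V^{\pi'}_\diamond(s')]$. Rolling out a trajectory $(s_0,a_0,s_1,a_1,\dots)$ under $\pi$, weighting the $t$-th advantage by $\gamma^t$, and taking the trajectory expectation, the inner transition expectation coincides with the law of the true successor $s_{t+1}$, so
$$
\EB_{\tau\sim\pi}\Big[\sum_{t=0}^\infty\gamma^t A^{\pi'}_\diamond(s_t,a_t)\Big]=\EB_{\tau\sim\pi}\Big[\sum_{t=0}^\infty\Big(\gamma^t\diamond(s_t,a_t)+\gamma^{t+1}V^{\pi'}_\diamond(s_{t+1})-\gamma^t V^{\pi'}_\diamond(s_t)\Big)\Big].
$$
The first term sums to $V^\pi_\diamond(s_0)$ by definition, while the remaining pair telescopes to $-V^{\pi'}_\diamond(s_0)$ once the boundary term $\gamma^t V^{\pi'}_\diamond(s_t)$ is shown to vanish as $t\to\infty$. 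This yields $V^\pi_\diamond(s_0)-V^{\pi'}_\diamond(s_0)=\EB_{\tau\sim\pi}\big[\sum_t\gamma^t A^{\pi'}_\diamond(s_t,a_t)\big]$.

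For the second step, I would exchange the summation and expectation and group by the state occupied at time $t$, writing $\EB_{\tau\sim\pi}[\gamma^t A^{\pi'}_\diamond(s_t,a_t)]=\gamma^t\sum_{s}\PB_\pi(s_t=s\mid s_0)\,\EB_{a\sim\pi(\cdot|s)}A^{\pi'}_\diamond(s,a)$. Summing over $t$ and recognizing $(1-\gamma)\sum_t\gamma^t\PB_\pi(s_t=s\mid s_0)=d_{s_0}^\pi(s)$ collapses the time-sum into $\tfrac{1}{1-\gamma}\EB_{s\sim d_{s_0}^\pi}\big[\EB_{a\sim\pi(\cdot|s)}A^{\pi'}_\diamond(s,a)\big]$, which is exactly the claimed expression.

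The main obstacle is not the algebra but the justification of the infinite-sum manipulations: I must certify that the interchange of summation and expectation is legitimate and that the telescoping boundary term $\gamma^t V^{\pi'}_\diamond(s_t)$ vanishes. Both follow from the discount $\gamma<1$ together with uniform boundedness of the signal $\diamond$ (whence $|V^{\pi'}_\diamond|\le \tfrac{\|\diamond\|_\infty}{1-\gamma}$), which makes the series absolutely convergent and licenses Fubini--Tonelli; once this is in place, the remainder is routine bookkeeping.
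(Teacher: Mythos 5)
Your proof is correct and is precisely the classical telescoping argument of Kakade and Langford; the paper gives no proof of its own here, simply citing that reference, so your argument coincides with the one the paper defers to. The one-step Bellman decomposition of the advantage, the telescoping with the vanishing boundary term $\gamma^t V^{\pi'}_\diamond(s_t)$, and the collapse of the time-sum into the discounted occupancy measure $d^\pi_{s_0}$ (with the $\tfrac{1}{1-\gamma}$ prefactor), together with the Fubini--Tonelli justification from boundedness, are exactly the standard route.
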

\begin{proof}
See \cite{Kakade02approximatelyoptimal}.
\end{proof}

\begin{lem}[Lipschitz values]\label{Lemma_Lipschitz_value}
Let Assumption~\ref{Assumption_Lipschitz} hold true. For any $s_0\in\gS$,
$$
|V_\diamond^{\pi_\theta}(s)-V_\diamond^{\pi_{\theta^\prime}}(s)|\leq \frac{|\gA|L_\pi\|\theta-\theta^\prime\|_2}{(1-\gamma)^2}.
$$
\end{lem}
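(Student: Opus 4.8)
The plan is to combine the performance difference lemma (Lemma~\ref{Lemma_performance_difference}) with a pointwise Lipschitz bound on the policy probabilities extracted from Assumption~\ref{Assumption_Lipschitz}. First I would apply Lemma~\ref{Lemma_performance_difference} with $\pi=\pi_\theta$ and $\pi^\prime=\pi_{\theta^\prime}$ to write
$$
V^{\pi_\theta}_\diamond(s)-V^{\pi_{\theta^\prime}}_\diamond(s)=\frac{1}{1-\gamma}\EB_{s^\prime\sim d^{\pi_\theta}_s}\left[\EB_{a\sim\pi_\theta(\cdot|s^\prime)}A^{\pi_{\theta^\prime}}_\diamond(s^\prime,a)\right].
$$
Since $\EB_{a\sim\pi_{\theta^\prime}(\cdot|s^\prime)}A^{\pi_{\theta^\prime}}_\diamond(s^\prime,a)=0$ by the definition of the advantage function, I can subtract this vanishing term and rewrite the inner expectation as $\sum_{a}(\pi_\theta(a|s^\prime)-\pi_{\theta^\prime}(a|s^\prime))A^{\pi_{\theta^\prime}}_\diamond(s^\prime,a)$, which is precisely what makes the policy difference appear.

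Then I would bound the two factors separately. For the advantage term, since rewards and costs take values in $[0,1]$ the value and state-action value functions lie in $[0,1/(1-\gamma)]$, so $|A^{\pi_{\theta^\prime}}_\diamond(s^\prime,a)|\leq 1/(1-\gamma)$. For the policy difference, the key step is to upgrade the Lipschitz control on $\log\pi_\theta$ to one on $\pi_\theta$ itself: because $\nabla_\theta\pi_\theta(a|s)=\pi_\theta(a|s)\nabla_\theta\log\pi_\theta(a|s)$ and $\pi_\theta(a|s)\leq 1$, Assumption~\ref{Assumption_Lipschitz} yields $\|\nabla_\theta\pi_\theta(a|s)\|_2\leq L_\pi$ uniformly in $\theta$. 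Integrating the gradient along the segment $\theta_u=\theta^\prime+u(\theta-\theta^\prime)$ then gives $|\pi_\theta(a|s^\prime)-\pi_{\theta^\prime}(a|s^\prime)|\leq L_\pi\|\theta-\theta^\prime\|_2$ for each action.

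Finally I would assemble the pieces: summing the per-action bound over $a\in\gA$ gives $\sum_a|\pi_\theta(a|s^\prime)-\pi_{\theta^\prime}(a|s^\prime)|\leq|\gA|L_\pi\|\theta-\theta^\prime\|_2$, and since $d^{\pi_\theta}_s$ is a probability distribution the outer expectation contributes no extra factor. Combining this with the advantage bound $1/(1-\gamma)$ and the $1/(1-\gamma)$ prefactor from the performance difference lemma produces the claimed $\frac{|\gA|L_\pi\|\theta-\theta^\prime\|_2}{(1-\gamma)^2}$. I expect the only mildly delicate obstacle to be the conversion from the Lipschitz bound on $\log\pi_\theta$ to one on $\pi_\theta$; everything else is a direct substitution once the zero-mean advantage identity has exposed the policy difference.
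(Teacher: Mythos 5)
Your proposal is correct and follows essentially the same route as the paper's proof: performance difference lemma, the zero-mean advantage identity to expose the policy difference, and the key observation $\|\nabla_\theta\pi_\theta(a|s)\|_2 = \pi_\theta(a|s)\|\nabla_\theta\log\pi_\theta(a|s)\|_2 \leq L_\pi$ to convert Assumption~\ref{Assumption_Lipschitz} into a pointwise Lipschitz bound on $\pi_\theta$. The only cosmetic difference is the H\"older pairing --- you take the $\ell_1$ norm of the policy difference (contributing $|\gA|L_\pi\|\theta-\theta'\|_2$) against an $\ell_\infty$ bound on the advantage ($1/(1-\gamma)$), whereas the paper takes $\|\pi_\theta(\cdot|s)-\pi_{\theta'}(\cdot|s)\|_\infty$ against $\|Q_\diamond^{\pi_{\theta'}}(s,\cdot)\|_1 \leq |\gA|/(1-\gamma)$ --- which yields the identical constant.
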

\begin{proof}
By Lemma~\ref{Lemma_performance_difference} we have
$$
\begin{aligned}
V_\diamond^{\pi_\theta}(s_0)-V_\diamond^{\pi_{\theta^\prime}}(s_0)&=\frac{1}{1-\gamma}\EB_{s\sim d^{\pi_\theta}}\left[\EB_{a\sim\pi_\theta(\cdot|s)}A_\diamond^{\pi_{\theta^\prime}}(s,a)\right]\\
&=\frac{1}{1-\gamma}\EB_{s\sim d^{\pi_\theta}}\left[(\pi_\theta(\cdot|s)-\pi_{\theta^\prime}(\cdot|s))^\top Q_\diamond^{\pi_{\theta^\prime}}(s,a)\right]\\
&\leq \frac{1}{1-\gamma}\EB_{s\sim d^{\pi_\theta}}\left[\|\pi_\theta(\cdot|s)-\pi_{\theta^\prime}(\cdot|s)\|_\infty \|Q_\diamond^{\pi_{\theta^\prime}}(s,a)\|_1\right]\\
&\leq \frac{|\gA|}{(1-\gamma)^2}\max_{s\in\gS,a\in\gA}|\pi_\theta(a|s)-\pi_{\theta^\prime}(a|s)|.
\end{aligned}
$$
Now we may apply Assumption~\ref{Assumption_Lipschitz} to get for any $s\in\gS,a\in\gA$,
$$
\|\nabla_\theta\pi_\theta(s|a)\|_2\leq \|\nabla_\theta\log\pi_\theta(s|a)\|_2\leq L_\pi.
$$
Therefore, for any $s\in\gS,a\in\gA$
$$
|\pi_\theta(a|s)-\pi_{\theta^\prime}(a|s)|\leq L_\pi\|\theta-\theta^\prime\|_2
$$
and
$$
\begin{aligned}
V_\diamond^{\pi_\theta}(s_0)-V_\diamond^{\pi_{\theta^\prime}}(s_0)&\leq\frac{|\gA|}{(1-\gamma)^2}\max_{s\in\gS,a\in\gA}|\pi_\theta(a|s)-\pi_{\theta^\prime}(a|s)|\\
&\leq \frac{|\gA|L_\pi\|\theta-\theta^\prime\|_2}{(1-\gamma)^2}
\end{aligned}
$$
\end{proof}

\begin{lem}[Strong duality and bounded dual variables]
\label{Lemma_strong_duality_bounded_dualvar_AUX}
If Assumption~\ref{Assumption_full_policy_class} and Assumption~\ref{Assumption_Slater} are true, we have:
$$
\begin{aligned}
&& &\text{(a):\ } J_r(\pi^*)=\sup_{\theta^*}\inf_{\lambda}L_0(\theta,\lambda);\\
&& &\text{(b):\ } \sum_{i=1}^N\lambda^*_i\leq  \frac{J_r(\pi^*)-J_r(\tilde\pi)}{\xi}\leq \frac{1}{(1-\gamma)\xi}.
\end{aligned}
$$
\end{lem}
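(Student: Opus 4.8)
The plan is to establish part (a) via the zero-duality-gap phenomenon for constrained MDPs and then deduce part (b) by evaluating the Lagrangian at the Slater point $\tilde\pi$. First I would reduce the parametrized problem to a problem over \emph{all} stationary policies: since Assumption~\ref{Assumption_full_policy_class} guarantees $\PM(\SM)^{\AM}\subset\bar{\Pi}_\theta$ and the maps $\theta\mapsto J_r(\theta)$, $\theta\mapsto J_{c_i}(\theta)$ are continuous, the supremum of $L_0(\theta,\lambda)$ over the parameter space coincides with its supremum over the whole set of stationary policies. Computing the inner infimum, $\inf_{\lambda\succcurlyeq0}L_0(\theta,\lambda)$ equals $J_r(\theta)$ when $\theta$ is feasible and $-\infty$ otherwise, so $\sup_\theta\inf_\lambda L_0(\theta,\lambda)$ is the optimal value of the constrained problem~\ref{Prob_CMDP} over all policies, namely $J_r(\pi^*)$. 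The substantive content of (a) is that this value also equals the dual value $\inf_\lambda\sup_\theta L_0(\theta,\lambda)$, i.e. that strong duality holds; this is what part (b) relies on.

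For the strong-duality step I would exploit the hidden convexity of the CMDP: although $J_r$ and $J_{c_i}$ are nonconcave/nonconvex in $\theta$, they are \emph{linear} in the discounted state--action occupancy measure $d^\pi$, and the set of admissible occupancy measures is a convex polytope (the Bellman-flow polytope). Hence the set of achievable value vectors $\{(J_r(\pi),J_{c_1}(\pi),\dots,J_{c_N}(\pi))\}$ is convex, and the constrained program is equivalent to a convex (indeed linear) program over occupancy measures. Slater's condition (Assumption~\ref{Assumption_Slater}) supplies a strictly feasible point, so the standard convex-duality theorem gives zero duality gap, $J_r(\pi^*)=\sup_\theta\inf_\lambda L_0(\theta,\lambda)=\inf_\lambda\sup_\theta L_0(\theta,\lambda)$; this is the zero-duality-gap result of \citet{paternain2019constrained}, and it is exactly where Assumption~\ref{Assumption_full_policy_class} is indispensable. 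I expect this to be the main obstacle, since it is the only step where the nonconvexity in $\theta$ must be circumvented.

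Given (a), part (b) is a short computation. Let $\lambda^*$ attain the dual optimum. By strong duality,
$$
J_r(\pi^*)=\sup_{\theta}L_0(\theta,\lambda^*)\geq L_0(\tilde\pi,\lambda^*)=J_r(\tilde\pi)+\sum_{i=1}^N\lambda_i^*\,(d_i-J_{c_i}(\tilde\pi)).
$$
Slater's condition gives $d_i-J_{c_i}(\tilde\pi)\geq\xi$ for every $i$, and since each $\lambda_i^*\geq0$,
$$
J_r(\pi^*)\geq J_r(\tilde\pi)+\xi\sum_{i=1}^N\lambda_i^*,
$$
which rearranges to $\sum_{i=1}^N\lambda_i^*\leq\big(J_r(\pi^*)-J_r(\tilde\pi)\big)/\xi$. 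Finally, with rewards normalized to $[0,1]$ (as is standard) one has $J_r(\tilde\pi)\geq0$ and $J_r(\pi^*)\leq 1/(1-\gamma)$, so $J_r(\pi^*)-J_r(\tilde\pi)\leq 1/(1-\gamma)$, yielding the second inequality $\sum_{i=1}^N\lambda_i^*\leq 1/\big((1-\gamma)\xi\big)$. The only care needed here is that $\tilde\pi$ (or a sequence converging to it) is admissible in the supremum over $\theta$, which is again guaranteed by Assumption~\ref{Assumption_full_policy_class} together with continuity of the value functions.
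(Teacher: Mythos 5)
Your proposal is correct and follows essentially the same route as the paper: part (a) rests on the zero-duality-gap result of \citet{paternain2019constrained} (which you sketch via occupancy-measure convexity, the idea underlying that citation), and part (b) bounds $\sum_i\lambda_i^*$ by evaluating the Lagrangian at the Slater point $\tilde\pi$ and using $\sup_\theta L_0(\theta,\lambda^*)\leq J_r(\pi^*)$. The paper packages this last step as a sublevel-set argument ($\Lambda_a:=\{\lambda\succcurlyeq 0 : \sup_\theta L_0(\theta,\lambda)\leq a\}$ with $a=J_r(\pi^*)$), but the inequality chain is identical to yours.
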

\begin{proof}
The proof of part (a) can be found in~\cite{paternain2019constrained}.
To prove part (b), we consider a sublevel set $\Lambda_a$:
$$
\Lambda_a := \{\lambda\in[0,\infty)^N\mid \sup_\theta L_0(\theta,\lambda)\leq a\}.
$$
For any $\lambda\in\Lambda_a$, we have
$$
J_r(\tilde\pi)+\xi\sum_{i=1}^N\lambda_i\leq J_r(\tilde\pi)+\sum_{i=1}^N\lambda_i(d_i-J_{c_i}(\tilde\pi))\leq a,
$$
where $\tilde\pi$ is defined as in Assumption~\ref{Assumption_Slater}.
Thus we have $\sum_{i=1}^N\lambda_i \leq \frac{a-J_r(\tilde\pi)}{\xi}$.
The result follows by setting $a=J_r(\pi^*)$.
\end{proof}

\begin{lem}[Constraint violation]\label{Lemma_optimization_constraint}
Let Assumption~\ref{Assumption_full_policy_class}, Assumption~\ref{Assumption_Slater} hold true. If there exists a policy $\bar\pi\in\Pi$, a positive scalar $C>2\lambda^*_i,\forall i\in\{1,...,N\}$, and another positive scalar $\delta$ such that:
$$
J_r(\pi^*)-J_r(\bar\pi)+C\sum_{i=1}^N(J_{c_i}(\bar \pi)-d_i)_+\leq \delta,
$$
then we have
$$
\sum_{i=1}^N (J_{c_i}(\bar \pi)-d_i)_+\leq \frac{2\delta}{C}.
$$
\end{lem}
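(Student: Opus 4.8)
The plan is to exploit the saddle-point structure guaranteed by strong duality (Lemma~\ref{Lemma_strong_duality_bounded_dualvar_AUX}) to convert the assumed inequality into a bound on the aggregate constraint violation. The starting point is the observation that strong duality makes $(\pi^*,\lambda^*)$ a saddle point of $L_0$, so that $J_r(\pi^*)=\sup_{\theta}L_0(\theta,\lambda^*)$ with $\lambda^*$ the optimal dual variable. Evaluating this supremum at the particular policy $\bar\pi\in\Pi$ yields
$$
J_r(\pi^*)\geq L_0(\bar\pi,\lambda^*)=J_r(\bar\pi)+\sum_{i=1}^N\lambda_i^*\bigl(d_i-J_{c_i}(\bar\pi)\bigr),
$$
which I would immediately rearrange into the lower bound $J_r(\pi^*)-J_r(\bar\pi)\geq-\sum_{i=1}^N\lambda_i^*\bigl(J_{c_i}(\bar\pi)-d_i\bigr)$.

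Next I would control the right-hand side using the hypothesis $C>2\lambda_i^*$. Since each $\lambda_i^*\geq 0$, a short case analysis on the sign of $J_{c_i}(\bar\pi)-d_i$ shows that $\lambda_i^*\bigl(J_{c_i}(\bar\pi)-d_i\bigr)\leq\frac{C}{2}\bigl(J_{c_i}(\bar\pi)-d_i\bigr)_+$ in both regimes: when the term is nonnegative one invokes $\lambda_i^*<C/2$, and when it is negative the left side is $\le 0$ while the right side is $0$. Summing over $i$ then gives $J_r(\pi^*)-J_r(\bar\pi)\geq-\frac{C}{2}\sum_{i=1}^N\bigl(J_{c_i}(\bar\pi)-d_i\bigr)_+$.

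Finally I would substitute this lower bound into the assumed inequality: adding $C\sum_{i=1}^N(J_{c_i}(\bar\pi)-d_i)_+$ to the displayed lower bound and comparing with the hypothesis $J_r(\pi^*)-J_r(\bar\pi)+C\sum_{i=1}^N(J_{c_i}(\bar\pi)-d_i)_+\leq\delta$ collapses the coefficient $C$ to $C/2$, yielding $\frac{C}{2}\sum_{i=1}^N(J_{c_i}(\bar\pi)-d_i)_+\leq\delta$, whence the claim follows upon dividing by $C/2$. The only genuinely delicate step is the sign-handling in the second paragraph: one must ensure the inequality $\lambda_i^*(J_{c_i}(\bar\pi)-d_i)\leq\frac{C}{2}(J_{c_i}(\bar\pi)-d_i)_+$ holds regardless of whether $\bar\pi$ happens to satisfy constraint $i$, since the lemma places no feasibility requirement on $\bar\pi$. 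Everything else is routine rearrangement, and no property of $\bar\pi$ beyond membership in $\Pi$ is used.
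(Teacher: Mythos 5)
Your proof is correct and takes essentially the same route as the paper: both rest on the saddle-point inequality $J_r(\bar\pi)+\sum_{i=1}^N\lambda_i^*\bigl(d_i-J_{c_i}(\bar\pi)\bigr)\leq J_r(\pi^*)$ supplied by strong duality, followed by the same algebra exploiting $C>2\lambda_i^*$ to collapse the hypothesis into $\frac{C}{2}\sum_{i=1}^N\bigl(J_{c_i}(\bar\pi)-d_i\bigr)_+\leq\delta$. The only cosmetic difference is that the paper passes from $J_{c_i}(\bar\pi)-d_i$ to its positive part via a perturbation function $P(\tau)$ evaluated at $\tau_i=-\bigl(J_{c_i}(\bar\pi)-d_i\bigr)_+$, whereas you achieve the same step by a direct sign case analysis.
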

\begin{proof}
For any $\tau\in \mathbb{R}^N$, define the perturbation function associated to Problem~\ref{Prob_CMDP} as:
$$
\begin{aligned}
P(\tau) = \max_{\pi}\ &J_r(\pi)\\
s.t.\ &J_{c_i}(\pi)\leq d_i-\tau_i,\ i=1,...,N.
\end{aligned}
$$
First, according to Lemma~\ref{Lemma_strong_duality_bounded_dualvar_AUX} we have $\forall\pi$,
$$
J_r(\pi)+\sum_{i=1}^N \lambda^*_i(d_i-J_{c_i}(\pi))\leq J_r(\pi^*)=P(0).
$$
For any $\pi\in\Pi$ such that $J_{c_i}(\pi)\leq d_i-\tau_i$, we have
$$
\begin{aligned}
P(0)-\sum_{i=1}^N\tau_i\lambda_i^*&\geq J_r(\pi)+\sum_{i=1}^N \lambda^*_i(d_i-J_{c_i}(\pi))-\sum_{i=1}^N\tau_i\lambda_i^*\\
&=J_r(\pi)+\sum_{i=1}^N \lambda^*_i(d_i-J_{c_i}(\pi)-\tau_i)\\
&\geq J_r(\pi).
\end{aligned}
$$
If we take $\tau_i=-(J_{c_i}(\bar \pi)-d_i)_+$, then
because $J_{c_i}(\bar\pi)\leq d_i+(J_{c_i}(\bar \pi)-d_i)_+$
we can get
$$
\begin{aligned}
    J_r(\bar\pi)\leq P(0)-\sum_{i=1}^N\tau_i\lambda_i^*=J_r(\pi^*)-\sum_{i=1}^N\tau_i\lambda_i^*.
\end{aligned}
$$
Noting that
$$
\frac{C}{2}\sum_{i=1}^N (-\tau_i)\leq \sum_{i=1}^N(C-\lambda^*_i)(-\tau_i)\leq C\sum_{i=1}^N(-\tau_i)+J_r(\pi^*)-J_r(\bar\pi)\leq \delta,
$$
we complete the proof.
\end{proof}

\section{Omitted Proofs}\label{Appendix_proof}
\begin{proof}[Proof of Lemma~\ref{Lemma_strong_duality_bounded_dualvar} ]
See the proof of Lemma~\ref{Lemma_strong_duality_bounded_dualvar_AUX}.
\end{proof}

\begin{lem}\label{Lemma_basic_expansion}
We have
$$
\begin{aligned}
&\EB\left[\frac{1}{TN}\sum_{t=0}^{T-1}\sum_{i=1}^N(V_r^*(\rho)-V_r^{(t)_i}(\rho))\right]\\
&\leq \frac{\log|\gA|}{T(1-\gamma)\eta_\theta}+\EB\left[\frac{1}{TN}\sum_{t=0}^{T-1}\sum_{i=1}^N \frac{\beta}{1-\gamma}\left\|\bar\theta^{(t)}-\theta_i^{(t)}\right\|_2\left\|\hat w_i^{(t)}\right\|_2\right]+\EB\left[\frac{1}{T}\sum_{t=0}^{T-1}\sum_{i=1}^N\lambda_i^{(t)}(V_{c_i}^{*}(\rho)-V_{c_i}^{(t)_i}(\rho))\right]\\
&\quad+\EB\left[\frac{1}{TN(1-\gamma)}\sum_{t=0}^{T-1}\sum_{i=1}^N\sqrt{E^{\nu^*}\left(r, \theta^{(t)},\hat w^{(t)}_r(i)\right)}+\frac{1}{T(1-\gamma)}\sum_{t=0}^{T-1}\sum_{i=1}^N\lambda_i^{(t)}\sqrt{E^{\nu^*}\left(c_i, \theta^{(t)},\hat w^{(t)}_{c_i}\right)}\right]\\
&\quad+\beta\eta_\theta V^2(N+1)\left(\frac{N}{2(1-\gamma)^3\xi^2}+\frac{1}{2(1-\gamma)}\right)\\
\end{aligned}
$$
\end{lem}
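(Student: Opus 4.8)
The plan is to port the potential-function analysis of natural policy gradient from \cite{Agarwal2021jmlr} to the federated primal--dual setting, using as potential the \emph{agent-averaged} relative entropy to the optimal policy,
$$\Psi^{(t)}:=\frac1N\sum_{i=1}^N\EB_{s\sim d^*}\KL\!\left(\pi^*(\cdot|s)\,\|\,\pi_i^{(t)}(\cdot|s)\right),$$
where $d^*$ is the discounted state-visitation measure of $\pi^*$ and $\nu^*(s,a)=d^*(s)\pi^*(a|s)$. Averaging (rather than summing) is what will let the potential telescope to a single $\log|\gA|$, using that all agents start from the uniform $\pi^{(0)}$ and that each communication reset replaces the $\theta_i^{(t)}$ by $\bar\theta^{(t)}$ without increasing $\Psi$ (convexity of $\KL$ under the parameter averaging $\bar\theta^{(t)}=\frac1N\sum_i\theta_i^{(t)}$). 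The other two ingredients are the Lagrangian decomposition of the NPG direction, $\hat w_i^{(t)}=\frac1N\hat w_r^{(t)}(i)-\lambda_i^{(t)}\hat w_{c_i}^{(t)}$, which will peel off the reward and dual-weighted cost contributions separately, and the compatible-function-approximation identity, which will produce the transferred error terms.

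First I would establish the per-agent one-step inequality. Writing the local update as $\theta_i^{(t+1)}=\theta_i^{(t)}+\eta_\theta\hat w_i^{(t)}$ and using the $\beta$-smoothness of $\log\pi_\theta$ (Assumption~\ref{Assumption_smooth}) to lower-bound $\EB_{a\sim\pi^*}\log(\pi_i^{(t+1)}/\pi_i^{(t)})$, I obtain
$$\EB_{s\sim d^*}\!\left[\KL(\pi^*\|\pi_i^{(t)})-\KL(\pi^*\|\pi_i^{(t+1)})\right]\ \ge\ \eta_\theta\,\EB_{s\sim d^*,a\sim\pi^*}\!\left[\hat w_i^{(t)\top}\nabla_\theta\log\pi_i^{(t)}(a|s)\right]-\frac{\beta\eta_\theta^2}{2}\big\|\hat w_i^{(t)}\big\|_2^2 .$$
I would then average this over $i$ and sum over $t$, so the left-hand side collapses to $\Psi^{(0)}-\Psi^{(T)}\le\log|\gA|$ after handling the resets as above; this supplies the leading $\frac{\log|\gA|}{T(1-\gamma)\eta_\theta}$ term once the reward coefficient below is divided out.

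Next I would expand the inner product. By the performance-difference lemma (Lemma~\ref{Lemma_performance_difference}), $\EB_{s\sim d^*,a\sim\pi^*}[A_\diamond^{(t)_i}]=(1-\gamma)(V_\diamond^*(\rho)-V_\diamond^{(t)_i}(\rho))$, which is what makes the \emph{local} values $V_r^{(t)_i},V_{c_i}^{(t)_i}$ appear; substituting the split of $\hat w_i^{(t)}$ yields the reward term with weight $\frac1N$ and the cost term with weight $\lambda_i^{(t)}$. To pass from the advantages $A_r^{(t)_i},A_{c_i}^{(t)_i}$ to $\hat w_r^{(t)}(i),\hat w_{c_i}^{(t)}$ I would use Cauchy--Schwarz together with the definition of $E^{\nu^*}$, giving the $\sqrt{E^{\nu^*}(\cdot)}$ terms, and absorb the mismatch between the local score $\nabla_\theta\log\pi_i^{(t)}$ and the aggregate score $\nabla_\theta\log\pi^{[t]}$ (at which the transferred error is measured) into the drift term $\frac{\beta}{1-\gamma}\|\bar\theta^{(t)}-\theta_i^{(t)}\|_2\|\hat w_i^{(t)}\|_2$ via Assumption~\ref{Assumption_smooth} once more. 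Finally, the quadratic remainder $\frac{\beta\eta_\theta}{2}\|\hat w_i^{(t)}\|_2^2$ is controlled by $\|\hat w_i^{(t)}\|_2^2\le 2(\frac1{N^2}\|\hat w_r^{(t)}(i)\|_2^2+(\lambda_i^{(t)})^2\|\hat w_{c_i}^{(t)}\|_2^2)$, the second-moment bounds of Assumption~\ref{Assumption_est_err}, and the dual bound $\sum_i\lambda_i\le\frac1{(1-\gamma)\xi}$ from Lemma~\ref{Lemma_strong_duality_bounded_dualvar}, which together collapse to the last displayed term.

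The hard part will be the federated bookkeeping rather than any single inequality: the potential is clean only when anchored at the local iterates, but the transferred approximation errors $E^{\nu^*}$ and the telescoping are cleanest at the aggregate $\bar\theta^{(t)}$, so the local score functions, the reference policy of the error, and the communication resets must all be reconciled simultaneously. In particular one must track how the $\frac1N$ reward weight of the local Lagrangian propagates through the uniform averaging into the final coefficients, and verify that the reset step genuinely does not inflate $\Psi$; these are exactly the places where the constant factors (and the $N$-dependence) are determined, and where the argument departs from the single-agent template. The resulting drift terms are not bounded here---they are left for a subsequent consensus-error lemma.
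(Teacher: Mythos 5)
Your scaffolding largely matches the paper's proof: a smoothness-based one-step lower bound on the KL decrement, the split of $\hat w_i^{(t)}$ into the $\frac1N$-weighted reward part and the $\lambda_i^{(t)}$-weighted cost part, the performance-difference lemma to convert averaged advantages under $\nu^*$ into the local value gaps $V_\diamond^*(\rho)-V_\diamond^{(t)_i}(\rho)$, Cauchy--Schwarz to produce the $\sqrt{E^{\nu^*}(\cdot)}$ terms, and the second-moment bounds plus the dual bound from Lemma~\ref{Lemma_strong_duality_bounded_dualvar} to collapse the quadratic remainder. Where you genuinely depart from the paper is the choice of potential, and that is where your argument has a gap. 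The paper anchors the potential at the \emph{averaged parameter}: it defines the virtual sequence $\bar\theta^{(t)}=\frac1N\sum_i\theta_i^{(t)}$ and tracks $\EB_{s\sim d^{\pi^*}}\KL(\pi^*(\cdot|s)\,\|\,\pi_{\bar\theta^{(t)}}(\cdot|s))$. Because the aggregation rule analyzed is exactly parameter averaging, the communication step does not move $\bar\theta^{(t)}$ at all; the virtual sequence satisfies $\bar\theta^{(t+1)}=\bar\theta^{(t)}+\eta_\theta\hat w^{(t)}$ at \emph{every} step, so the potential telescopes with no reset to control, and the price paid is the drift term $\frac{\beta}{1-\gamma}\|\bar\theta^{(t)}-\theta_i^{(t)}\|_2\|\hat w_i^{(t)}\|_2$ (the one-step inequality produces scores at $\bar\theta^{(t)}$ while the advantages and $\hat w$'s live at $\theta_i^{(t)}$). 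Your potential $\Psi^{(t)}=\frac1N\sum_i\EB_{s\sim d^{\pi^*}}\KL(\pi^*\|\pi_i^{(t)})$ instead jumps at every communication round, and your justification that the jump is nonpositive --- ``convexity of $\KL$ under the parameter averaging'' --- requires $\theta\mapsto\KL(\pi^*(\cdot|s)\|\pi_\theta(\cdot|s))$ to be convex, equivalently $\theta\mapsto\log\pi_\theta(a|s)$ to be concave. Nothing in Assumptions~\ref{Assumption_differentiable}--\ref{Assumption_smooth} gives this: the class is only assumed differentiable, Lipschitz, and smooth. Log-concavity in $\theta$ does hold for softmax/log-linear policies, but the lemma is stated and used for the general smooth class, so this step is unjustified as written.

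The gap is repairable, though not by convexity. Using the quadratic upper bound from Assumption~\ref{Assumption_smooth} at the point $\bar\theta^{(t)}$ and averaging over agents, the linear terms cancel because $\frac1N\sum_i(\theta_i^{(t)}-\bar\theta^{(t)})=0$, giving
$$
\EB_{s\sim d^{\pi^*}}\KL\left(\pi^*\|\pi_{\bar\theta^{(t)}}\right)-\frac1N\sum_{i=1}^N\EB_{s\sim d^{\pi^*}}\KL\left(\pi^*\|\pi_{\theta_i^{(t)}}\right)\leq\frac{\beta}{2N}\sum_{i=1}^N\left\|\theta_i^{(t)}-\bar\theta^{(t)}\right\|_2^2,
$$
so each reset costs at most a squared consensus error, which by Lemma~\ref{Lemma_bound_bar_theta_and_theta_i} is $O((E-1)^2\eta_\theta^2)$ and, after summing over the $T/E$ communication rounds and dividing by $T(1-\gamma)\eta_\theta$, is of the same order as the terms already present. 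But note this produces an inequality with an extra consensus term that does \emph{not} appear on the right-hand side of Lemma~\ref{Lemma_basic_expansion} as stated (and your route also drops the drift term that \emph{does} appear there): you would be proving a variant of the lemma, sufficient for Theorem~\ref{Theorem_main} after the same downstream bookkeeping, rather than the literal statement. The cleaner fix is simply to adopt the paper's potential at $\bar\theta^{(t)}$, for which no jump ever occurs.
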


\begin{proof}
Recall that the virtual sequence $\{\bar\theta^{(t)}\}$ is defined as:
$$
\bar\theta^{(t)}=\frac{1}{N}\sum_{i=1}^N{\theta_i^{(t)}}.
$$
And we always have
$$
\begin{aligned}
\bar\theta^{(t+1)}&=\frac{1}{N}\sum_{i=1}^N\theta_i^{(t+1)}\\
&=\frac{1}{N}\sum_{i=1}^N\left[\theta_i^{(t)}+\eta_\theta\hat w_i^{(t)}\right]\\
&=\frac{1}{N}\sum_{i=1}^N\left[\theta_i^{(t)}+\eta_\theta\left(\hat w_r^{(t)}(i)-N\lambda_i^{(t)}\hat w_{c_i}^{(t)}\right)\right]\\
&=\bar\theta^{(t)}+\eta_\theta\left(\frac{1}{N}\sum_{i=1}^N\hat w_r^{(t)}(i)-\sum_{i=1}^N\lambda_i^{(t)}\hat w_{c_i}^{(t)}\right)\\
&:=\bar\theta^{(t)}+\eta_\theta\hat w^{(t)}
\end{aligned}
$$
We have $\bar\theta^{(t)}=\theta^{(t)}$ when $E\mid t$.
Now we have
\begin{align*}
&\EB_{s\sim d^{\pi^*}}(D_{\operatorname{KL}}(\pi^*(\cdot|s)\|\pi^{[t]}(\cdot|s))-D_{\operatorname{KL}}(\pi^*(\cdot|s)\|\pi^{[t+1]}(\cdot|s)))\\
&=-\EB_{(s,a)\sim\nu^*}\log\frac{\pi^{[t]}(a|s)}{\pi^{[t+1]}(a|s)}\\
&\geq \eta_\theta\EB_{(s,a)\sim\nu^*}[\nabla_{\theta} \log \pi^{[t]}(a | s)^\top \hat w^{(t)}]-\frac{\beta\eta_\theta^2}{2}\|\hat w^{(t)}\|^2_2\\
&= \frac{1}{N}\sum_{i=1}^N\eta_\theta\EB_{(s,a)\sim\nu^*}\left[\nabla_{\theta} \log \pi^{[t]}(a | s)^\top \hat w_i^{(t)}\right]-\frac{\beta\eta_\theta^2}{2}\|\hat w^{(t)}\|^2_2\\
&= \frac{1}{N}\sum_{i=1}^N\eta_\theta\EB_{(s,a)\sim\nu^*}\left[ \left(\nabla_{\theta}\log \pi^{[t]}(a | s)-\nabla_{\theta}\log \pi^{(t)}_i(a | s)\right)^\top \hat w_i^{(t)}\right]+\frac{1}{N}\sum_{i=1}^N\eta_\theta\EB_{(s,a)\sim\nu^*}\left[ \nabla_{\theta}\log \pi^{(t)}_i(a | s)^\top \hat w_i^{(t)}\right]\\
&\quad-\frac{\beta\eta_\theta^2}{2}\|\hat w^{(t)}\|^2_2\\
&\geq -\frac{1}{N}\sum_{i=1}^N\eta_\theta\EB_{(s,a)\sim\nu^*}\left[\left\|\nabla_{\theta}\log \pi^{[t]}(a | s)-\nabla_{\theta}\log \pi^{(t)}_i(a | s)\right\|_2 \left\|\hat w_i^{(t)}\right\|_2\right]+\frac{1}{N}\sum_{i=1}^N\eta_\theta\EB_{(s,a)\sim\nu^*}\left[ \nabla_{\theta}\log \pi^{(t)}_i(a | s)^\top \hat w_i^{(t)}\right]\\
&\quad-\frac{\beta\eta_\theta^2}{2}\|\hat w^{(t)}\|^2_2\\
&=-\frac{1}{N}\sum_{i=1}^N\eta_\theta \beta\left\|\bar\theta^{(t)}-\theta_i^{(t)}\right\|_2\left\|\hat w_i^{(t)}\right\|_2+\frac{1}{N}\sum_{i=1}^N\eta_\theta\EB_{(s,a)\sim\nu^*}\left[ \nabla_{\theta}\log \pi^{(t)}_i(a | s)^\top \hat w_i^{(t)}\right]-\frac{\beta\eta_\theta^2}{2}\|\hat w^{(t)}\|^2_2\\
&=-\frac{1}{N}\sum_{i=1}^N\eta_\theta \beta\left\|\bar\theta^{(t)}-\theta_i^{(t)}\right\|_2\left\|\hat w_i^{(t)}\right\|_2+\frac{1}{N}\sum_{i=1}^N\eta_\theta\EB_{(s,a)\sim\nu^*} A_r^{(t)_i}(s,a)-\sum_{i=1}^N\eta_\theta \EB_{(s,a)\sim\nu^*}\left[\lambda_i^{(t)}A_{c_i}^{(t)_i}(s,a)\right]\\
&\quad+\frac{1}{N}\sum_{i=1}^N\eta_\theta \EB_{(s,a)\sim\nu^*}\left[\nabla_\theta\log \pi^{(t)}_i(a|s)^\top \hat w_i^{(t)}-\left(A_r^{(t)_i}(s,a)-N \lambda_i^{(t)}A_{c_i}^{(t)_i}(s,a)\right)\right]-\frac{\beta\eta_\theta^2}{2}\|\hat w^{(t)}\|^2_2\\
&=-\frac{1}{N}\sum_{i=1}^N\eta_\theta \beta\left\|\bar\theta^{(t)}-\theta_i^{(t)}\right\|_2\left\|\hat w_i^{(t)}\right\|_2+\frac{1}{N}\sum_{i=1}^N\eta_\theta\EB_{(s,a)\sim\nu^*} A_r^{(t)_i}(s,a)-\sum_{i=1}^N\eta_\theta \EB_{(s,a)\sim\nu^*}\left[\lambda_i^{(t)}A_{c_i}^{(t)_i}(s,a)\right]\\
&\quad+\frac{1}{N}\sum_{i=1}^N\eta_\theta \EB_{(s,a)\sim\nu^*}\left[\nabla_\theta\log \pi^{(t)}_i(a|s)^\top \hat w_i^{(t)}-\left(A_r^{(t)_i}(s,a)-N \lambda_i^{(t)}A_{c_i}^{(t)_i}(s,a)\right)\right]-\frac{\beta\eta_\theta^2}{2}\|\hat w^{(t)}\|^2_2\\
&=-\frac{1}{N}\sum_{i=1}^N\eta_\theta \beta\left\|\bar\theta^{(t)}-\theta_i^{(t)}\right\|_2\left\|\hat w_i^{(t)}\right\|_2+\frac{1}{N}\sum_{i=1}^N\eta_\theta\EB_{(s,a)\sim\nu^*} A_r^{(t)_i}(s,a)-\sum_{i=1}^N\eta_\theta \EB_{(s,a)\sim\nu^*}\left[\lambda_i^{(t)}A_{c_i}^{(t)}(s,a)\right]-\frac{\beta\eta_\theta^2}{2}\|\hat w^{(t)}\|^2_2\\
&\quad+\frac{1}{N}\sum_{i=1}^N\eta_\theta \EB_{(s,a)\sim\nu^*}\left[\nabla_\theta\log \pi^{(t)_i}(a|s)^\top \hat w_r^{(t)}(i)-A_r^{(t)_i}(s,a)\right]
-\sum_{i=1}^N \eta_\theta\lambda_i^{(t)}\EB_{(s,a)\sim\nu^*}\left[\nabla_\theta \log\pi^{(t)_i}(a|s)^\top \hat w_{c_i}^{(t)}(t)-A_{c_i}^{(t)_i}(s,a)\right].\\
\end{align*}
Now we apply the performance difference lemma and Jensen's inequality:
$$
\begin{aligned}
&-\frac{1}{N}\sum_{i=1}^N\eta_\theta \beta\left\|\bar\theta^{(t)}-\theta_i^{(t)}\right\|_2\left\|\hat w_i^{(t)}\right\|_2+\frac{1}{N}\sum_{i=1}^N\eta_\theta\EB_{(s,a)\sim\nu^*} A_r^{(t)_i}(s,a)-\sum_{i=1}^N\eta_\theta \EB_{(s,a)\sim\nu^*}\left[\lambda_i^{(t)}A_{c_i}^{(t)}(s,a)\right]-\frac{\beta\eta_\theta^2}{2}\|\hat w^{(t)}\|^2_2\\
&\quad+\frac{1}{N}\sum_{i=1}^N\eta_\theta \EB_{(s,a)\sim\nu^*}\left[\nabla_\theta\log \pi^{(t)_i}(a|s)^\top \hat w_r^{(t)}(i)-A_r^{(t)_i}(s,a)\right]
-\sum_{i=1}^N \eta_\theta\lambda_i^{(t)}\EB_{(s,a)\sim\nu^*}\left[\nabla_\theta \log\pi^{(t)_i}(a|s)^\top \hat w_{c_i}^{(t)}(t)-A_{c_i}^{(t)_i}(s,a)\right]\\
&\geq -\frac{1}{N}\sum_{i=1}^N\eta_\theta \beta\left\|\bar\theta^{(t)}-\theta_i^{(t)}\right\|_2\left\|\hat w_i^{(t)}\right\|_2+\frac{1}{N}\sum_{i=1}^N(1-\gamma)\eta_\theta(V_r^*(\rho)-V_r^{(t)_i}(\rho))- \sum_{i=1}^N(1-\gamma)\eta_\theta\lambda_i^{(t)}(V_{c_i}^{*}(\rho)-V_{c_i}^{(t)_i}(\rho))\\
&\quad-\frac{\eta_\theta}{N}\sum_{i=1}^N\sqrt{E^{\nu^*}\left(r, \theta_i^{(t)},\hat w^{(t)}_r(i)\right)}
-\eta_\theta\sum_{i=1}^N\lambda_i^{(t)}\sqrt{E^{\nu^*}\left(c_i, \theta_i^{(t)},\hat w^{(t)}_{c_i}\right)}-\frac{\beta\eta_\theta^2}{2}\|\hat w^{(t)}\|^2_2.
\end{aligned}
$$
Rearranging terms yields
$$
\begin{aligned}
&\frac{1}{N}\sum_{i=1}^N(V_r^*(\rho)-V_r^{(t)_i}(\rho))\\
&\leq \frac{1}{(1-\gamma)\eta_\theta}\EB_{s\sim d^{\pi^*}}(D_{\operatorname{KL}}(\pi^*(\cdot|s)\|\pi^{[t]}(\cdot|s))-D_{\operatorname{KL}}(\pi^*(\cdot|s)\|\pi^{[t+1]}(\cdot|s)))+\frac{1}{N}\sum_{i=1}^N \frac{\beta}{1-\gamma}\left\|\bar\theta^{(t)}-\theta_i^{(t)}\right\|_2\left\|\hat w_i^{(t)}\right\|_2\\
&\quad+\sum_{i=1}^N\lambda_i^{(t)}(V_{c_i}^{*}(\rho)-V_{c_i}^{(t)_i}(\rho))+\frac{1}{N(1-\gamma)}\sum_{i=1}^N\sqrt{E^{\nu^*}\left(r, \theta_i^{(t)},\hat w^{(t)}_r(i)\right)}+\frac{1}{1-\gamma}\sum_{i=1}^N\lambda_i^{(t)}\sqrt{E^{\nu^*}\left(c_i, \theta_i^{(t)},\hat w^{(t)}_{c_i}\right)}\\
&\quad +\frac{\beta\eta_\theta}{2(1-\gamma)}\|\hat w^{(t)}\|^2_2.
\end{aligned}
$$
Therefore,
$$
\begin{aligned}
&\frac{1}{TN}\sum_{t=0}^{T-1}\sum_{i=1}^N(V_r^*(\rho)-V_r^{(t)_i}(\rho))\\
&\leq \frac{1}{T(1-\gamma)\eta_\theta}\EB_{s\sim d^{\pi^*}}(D_{\operatorname{KL}}(\pi^*(\cdot|s)\|\pi^{[0]}(\cdot|s))-D_{\operatorname{KL}}(\pi^*(\cdot|s)\|\pi^{[T]}(\cdot|s)))+\frac{1}{TN}\sum_{t=0}^{T-1}\sum_{i=1}^N \frac{\beta}{1-\gamma}\left\|\bar\theta^{(t)}-\theta_i^{(t)}\right\|_2\left\|\hat w_i^{(t)}\right\|_2\\
&\quad+\frac{1}{TN(1-\gamma)}\sum_{t=0}^{T-1}\sum_{i=1}^N\sqrt{E^{\nu^*}\left(r, \theta^{(t)},\hat w^{(t)}_r(i)\right)}+\frac{1}{T(1-\gamma)}\sum_{t=0}^{T-1}\sum_{i=1}^N\lambda_i^{(t)}\sqrt{E^{\nu^*}\left(c_i, \theta^{(t)},\hat w^{(t)}_{c_i}\right)}\\
&\quad +\frac{1}{T}\sum_{t=0}^{T-1}\sum_{i=1}^N\lambda_i^{(t)}(V_{c_i}^{*}(\rho)-V_{c_i}^{(t)_i}(\rho))+\frac{\beta\eta_\theta}{2T(1-\gamma)}\sum_{t=0}^{T-1}\|\hat w^{(t)}\|^2_2\\
&\leq \frac{\log|\gA|}{T(1-\gamma)\eta_\theta}+\frac{1}{TN}\sum_{t=0}^{T-1}\sum_{i=1}^N \frac{\beta}{1-\gamma}\left\|\bar\theta^{(t)}-\theta_i^{(t)}\right\|_2\left\|\hat w_i^{(t)}\right\|_2+\frac{1}{T}\sum_{t=0}^{T-1}\sum_{i=1}^N\lambda_i^{(t)}(V_{c_i}^{*}(\rho)-V_{c_i}^{(t)_i}(\rho))+\frac{\beta\eta_\theta}{2T(1-\gamma)}\sum_{t=0}^{T-1}\|\hat w^{(t)}\|^2_2\\
&\quad+\frac{1}{TN(1-\gamma)}\sum_{t=0}^{T-1}\sum_{i=1}^N\sqrt{E^{\nu^*}\left(r, \theta^{(t)},\hat w^{(t)}_r(i)\right)}+\frac{1}{T(1-\gamma)}\sum_{t=0}^{T-1}\sum_{i=1}^N\lambda_i^{(t)}\sqrt{E^{\nu^*}\left(c_i, \theta^{(t)},\hat w^{(t)}_{c_i}\right)}.
\end{aligned}
$$
Now we take expectation to both sides and use $\lambda_i^{(t)}\in[0,\frac{2}{(1-\gamma)\xi}]$:
$$
\begin{aligned}
&\EB\left[\frac{1}{TN}\sum_{t=0}^{T-1}\sum_{i=1}^N(V_r^*(\rho)-V_r^{(t)_i}(\rho))\right]\\
&\leq \frac{\log|\gA|}{T(1-\gamma)\eta_\theta}+\EB\left[\frac{1}{TN}\sum_{t=0}^{T-1}\sum_{i=1}^N \frac{\beta}{1-\gamma}\left\|\bar\theta^{(t)}-\theta_i^{(t)}\right\|_2\left\|\hat w_i^{(t)}\right\|_2\right]+\EB\left[\frac{1}{T}\sum_{t=0}^{T-1}\sum_{i=1}^N\lambda_i^{(t)}(V_{c_i}^{*}(\rho)-V_{c_i}^{(t)_i}(\rho))\right]\\
&\quad+\EB\left[\frac{1}{TN(1-\gamma)}\sum_{t=0}^{T-1}\sum_{i=1}^N\sqrt{E^{\nu^*}\left(r, \theta^{(t)},\hat w^{(t)}_r(i)\right)}+\frac{1}{T(1-\gamma)}\sum_{t=0}^{T-1}\sum_{i=1}^N\lambda_i^{(t)}\sqrt{E^{\nu^*}\left(c_i, \theta^{(t)},\hat w^{(t)}_{c_i}\right)}\right]\\
&\quad+\EB\left[\frac{\beta\eta_\theta}{2T(1-\gamma)}\sum_{t=0}^{T-1}\|\hat w^{(t)}\|^2_2\right]\\
&\leq \frac{\log|\gA|}{T(1-\gamma)\eta_\theta}+\EB\left[\frac{1}{TN}\sum_{t=0}^{T-1}\sum_{i=1}^N \frac{\beta}{1-\gamma}\left\|\bar\theta^{(t)}-\theta_i^{(t)}\right\|_2\left\|\hat w_i^{(t)}\right\|_2\right]+\EB\left[\frac{1}{T}\sum_{t=0}^{T-1}\sum_{i=1}^N\lambda_i^{(t)}(V_{c_i}^{*}(\rho)-V_{c_i}^{(t)_i}(\rho))\right]\\
&\quad+\EB\left[\frac{1}{TN(1-\gamma)}\sum_{t=0}^{T-1}\sum_{i=1}^N\sqrt{E^{\nu^*}\left(r, \theta^{(t)},\hat w^{(t)}_r(i)\right)}+\frac{1}{T(1-\gamma)}\sum_{t=0}^{T-1}\sum_{i=1}^N\lambda_i^{(t)}\sqrt{E^{\nu^*}\left(c_i, \theta^{(t)},\hat w^{(t)}_{c_i}\right)}\right]\\
&\quad+\beta\eta_\theta V^2(N+1)\left(\frac{N}{2(1-\gamma)^3\xi^2}+\frac{1}{2(1-\gamma)}\right).
\end{aligned}
$$
We complete the proof.
\end{proof}

\begin{lem}\label{Lemma_bound_bar_theta_and_theta_i}
For any $i\in\{1,...,N\}$, $t\in\{0,...,T-1\}$, 
$$
\begin{aligned}
\EB \|\bar\theta^{(t)}-\theta_i^{(t)}\|_2^2&\leq 4(E-1)^2\eta_\theta^2V^2(N+1)\left(\frac{N}{(1-\gamma)^2\xi^2}+1\right)\\
\EB \|\theta^{(t)}-\theta_i^{(t)}\|_2&\leq (E-1)\eta_\theta V\left(\frac{N}{(1-\gamma)\xi}+1\right)\\
\end{aligned}
$$
\end{lem}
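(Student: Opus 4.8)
\emph{Proof plan.} The statement is a bounded-client-drift estimate, so the plan is to control how far an individual agent's parameters $\theta_i^{(t)}$ can wander from the average $\bar\theta^{(t)}$ (respectively the frozen global iterate $\theta^{(t)}$) within a single communication window. First I would fix $t$ and let $t_0\le t$ be the most recent synchronization time (a multiple of $E$), so that at $t_0$ all agents agree, $\theta_i^{(t_0)}=\theta^{(t_0)}=\bar\theta^{(t_0)}$, the global iterate stays frozen at $\theta^{(t)}=\theta^{(t_0)}$ throughout the window, and there are at most $E-1$ local steps between $t_0$ and $t$. Unrolling the local update from the proof of Lemma~\ref{Lemma_basic_expansion} gives the telescoping form $\theta_i^{(t)}=\theta^{(t_0)}+\eta_\theta\sum_{s=t_0}^{t-1}\hat w_i^{(s)}$ with $\hat w_i^{(s)}=\hat w_r^{(s)}(i)-N\lambda_i^{(s)}\hat w_{c_i}^{(s)}$; since the multipliers are frozen across communication and the aggregation acts only on $\theta$, this is exactly the relevant recursion.

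\emph{Key per-step bound.} The heart of the argument is a uniform control of a single update direction. The naive split $\|\hat w_i^{(s)}\|_2^2\le 2\|\hat w_r^{(s)}(i)\|_2^2+2N^2(\lambda_i^{(s)})^2\|\hat w_{c_i}^{(s)}\|_2^2$ produces an $N^2$ factor and loses the stated constant, so instead I would regroup $N\lambda_i^{(s)}\hat w_{c_i}^{(s)}=\sum_{k=1}^N \lambda_i^{(s)}\hat w_{c_i}^{(s)}$ and treat $\hat w_i^{(s)}$ as a sum of $N+1$ vectors. Applying $\|\sum_{k=1}^{N+1}x_k\|_2^2\le (N+1)\sum_{k=1}^{N+1}\|x_k\|_2^2$, taking expectations, invoking the second-moment bound $V^2$ of Assumption~\ref{Assumption_est_err} on $\EB\|\hat w_r^{(s)}(i)\|_2^2$ and $\EB\|\hat w_{c_i}^{(s)}\|_2^2$, and using $\lambda_i^{(s)}\le \tfrac{1}{(1-\gamma)\xi}$ (the dual bound of Lemma~\ref{Lemma_strong_duality_bounded_dualvar}, enforced by the projection set $\Lambda$) yields $\EB\|\hat w_i^{(s)}\|_2^2\le (N+1)V^2\big(\tfrac{N}{(1-\gamma)^2\xi^2}+1\big)$. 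The companion first-moment estimate follows immediately from the triangle inequality and Jensen: $\EB\|\hat w_i^{(s)}\|_2\le V+N\lambda_i^{(s)}V\le V\big(1+\tfrac{N}{(1-\gamma)\xi}\big)$.

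\emph{Assembling the two inequalities.} For the second (first-moment) bound I would write $\|\theta^{(t)}-\theta_i^{(t)}\|_2=\eta_\theta\|\sum_{s=t_0}^{t-1}\hat w_i^{(s)}\|_2\le \eta_\theta\sum_{s=t_0}^{t-1}\|\hat w_i^{(s)}\|_2$, so that summing the at most $E-1$ per-step estimates gives $\EB\|\theta^{(t)}-\theta_i^{(t)}\|_2\le (E-1)\eta_\theta V(1+\tfrac{N}{(1-\gamma)\xi})$. For the first (second-moment) bound I would use Jensen across agents, $\|\bar\theta^{(t)}-\theta_i^{(t)}\|_2^2\le\frac1N\sum_{j=1}^N\|\theta_j^{(t)}-\theta_i^{(t)}\|_2^2$, and for each pair write $\theta_j^{(t)}-\theta_i^{(t)}=\eta_\theta\sum_{s=t_0}^{t-1}(\hat w_j^{(s)}-\hat w_i^{(s)})$, then combine Cauchy--Schwarz over the $\le E-1$ summands with $\|\hat w_j^{(s)}-\hat w_i^{(s)}\|_2^2\le 2\|\hat w_j^{(s)}\|_2^2+2\|\hat w_i^{(s)}\|_2^2$ and substitute the key per-step bound; the two factors of $2$ and the $(E-1)\cdot(E-1)$ from Cauchy--Schwarz combine into the advertised $4(E-1)^2$ prefactor, leaving exactly $4(E-1)^2\eta_\theta^2 V^2(N+1)\big(\tfrac{N}{(1-\gamma)^2\xi^2}+1\big)$.

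\emph{Main obstacle.} Everything is routine once the per-step second moment is controlled with the sharp constant; the single load-bearing step is the $(N+1)$-term regrouping of $\hat w_i^{(s)}$, which converts a crude $N^2$ dependence into the correct $N(N+1)$ scaling so that the constants match. The only bookkeeping I would watch carefully is (i) that $t-t_0\le E-1$ because synchronization resets the drift to zero at the start of each window, and (ii) consistently using the update normalization $\hat w_i^{(s)}=\hat w_r^{(s)}(i)-N\lambda_i^{(s)}\hat w_{c_i}^{(s)}$ dictated by the aggregation identity in Lemma~\ref{Lemma_basic_expansion}.
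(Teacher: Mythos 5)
Your proposal is correct and takes essentially the same route as the paper's proof: fix the most recent synchronization time $t_0$ with $t-t_0\le E-1$, unroll the local updates $\theta_i^{(t)}=\theta_i^{(t_0)}+\eta_\theta\sum_{s=t_0}^{t-1}\hat w_i^{(s)}$, apply the triangle inequality (first moment) or Cauchy--Schwarz over the at most $E-1$ summands (second moment), and control each step using the moment bounds of Assumption~\ref{Assumption_est_err} together with the boundedness of the dual variables. The only difference is cosmetic: for the second-moment bound the paper splits $\EB\|\bar\theta^{(t)}-\theta_i^{(t)}\|_2^2\le 2\EB\|\bar\theta^{(t)}-\bar\theta^{(t_0)}\|_2^2+2\EB\|\theta_i^{(t)}-\theta_i^{(t_0)}\|_2^2$ and bounds the two drifts from the sync point separately, whereas you use the pairwise decomposition $\|\bar\theta^{(t)}-\theta_i^{(t)}\|_2^2\le\frac{1}{N}\sum_{j}\|\theta_j^{(t)}-\theta_i^{(t)}\|_2^2$; both deliver the identical $4(E-1)^2$ prefactor and rest on the same per-step bound $\EB\|\hat w_i^{(s)}\|_2^2\le V^2(N+1)\bigl(\tfrac{N}{(1-\gamma)^2\xi^2}+1\bigr)$.
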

\begin{proof}
By the definition of $\bar\theta^{(t)}$, we may always find $t-E<t_0\leq t$ such that $\bar\theta_i^{(t_0)}=\theta_i^{(t_0)}$, $\forall i \in \{1,...,N\}$.
If $t_0=t$, then the conclusion holds trivially.
Else we have
$$
\begin{aligned}
&\EB \|\bar\theta^{(t)}-\theta_i^{(t)}\|_2^2\\
&\leq2\EB \|\bar\theta^{(t)}-\bar\theta^{(t_0)}\|_2^2+2\EB \|\theta_i^{(t)}-\theta_i^{(t_0)}\|_2^2\\
&=2\EB\left[\left\|\sum_{t^\prime=t_0}^{t-1}\eta_\theta\hat w^{(t^\prime)}\right\|_{2}^2\right]+2\EB\left[\left\|\sum_{t^\prime=t_0}^{t-1}\eta_\theta\hat w_i^{(t^\prime)}\right\|_{2}^2\right]\\
&\leq 2(t-1-t_0)\sum_{t^\prime=t_0}^{t-1}\EB\left[\left\|\eta_\theta\hat w^{(t^\prime)}\right\|_{2}^2\right]+2(t-1-t_0)\sum_{t^\prime=t_0}^{t-1}\EB\left[\left\|\eta_\theta\hat w_i^{(t^\prime)}\right\|_{2}^2\right]\\
&\leq 2(E-1)\sum_{t^\prime=t_0}^{t_0+E-2}\EB\left[\left\|\eta_\theta\hat w^{(t^\prime)}\right\|_{2}^2\right]+2(E-1)\sum_{t^\prime=t_0}^{t_0+E-2}\EB\left[\left\|\eta_\theta\hat w_i^{(t^\prime)}\right\|_{2}^2\right]\\
&\leq 4(E-1)^2\eta_\theta^2V^2(N+1)\left(\frac{N}{(1-\gamma)^2\xi^2}+1\right)\\
\end{aligned}
$$
Similarly, there always exists $t-E<t_0\leq t$ such that $\theta^{(t)}=\theta^{(t_0)}=\theta_i^{(t_0)}$, $\forall i \in \{1,...,N\}$. 
If $t_0=t$, then the conclusion holds trivially. 
Else we have
$$
\begin{aligned}
&\EB \|\theta^{(t)}-\theta_i^{(t)}\|_2\\
&=\EB \|\theta_i^{(t)}-\theta_i^{(t_0)}\|_2\\
&=\EB\left[\left\|\sum_{t^\prime=t_0}^{t-1}\eta_\theta\hat w_i^{(t^\prime)}\right\|_{2}\right]\\
&\leq \sum_{t^\prime=t_0}^{t-1}\EB\left[\left\|\eta_\theta\hat w_i^{(t^\prime)}\right\|_{2}\right]\\
&\leq \sum_{t^\prime=t_0}^{t_0+E-2}\EB\left[\left\|\eta_\theta\hat w_i^{(t^\prime)}\right\|_{2}\right]\\
&\leq (E-1)\eta_\theta V\left(\frac{N}{(1-\gamma)\xi}+1\right)\\
\end{aligned}
$$
We complete the proof.
\end{proof}

\begin{lem}\label{Lemma_SGD_convergence}
For any $i\in\{1,...,N\}$, we have
$$
\begin{aligned}
\EB E^{\nu^*}(c_i,\theta^{(t)}, \hat w_{c_i}^{(t)})&\leq \frac{1}{1-\gamma}\left\|\frac{\nu^*}{\nu_0}\right\|_\infty\left(\epsilon_{bias}+\frac{2\left(2\sqrt{d}WL_\pi+\frac{2\sqrt{d}}{1-\gamma}+WL_\pi\right)^2}{K}\right),\\
\EB E^{\nu^*}(r,\theta^{(t)}, \hat w_{r}^{(t)}(i))&\leq \frac{1}{1-\gamma}\left\|\frac{\nu^*}{\nu_0}\right\|_\infty\left(\epsilon_{bias}+\frac{2\left(2\sqrt{d}WL_\pi+\frac{2\sqrt{d}}{1-\gamma}+WL_\pi\right)^2}{K}\right),
\end{aligned}
$$
where $\nu_0$ is the uniform distribution on $\gS\times\gA$.
\end{lem}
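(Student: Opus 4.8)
The plan is to recognize that each output weight $\hat w_{c_i}^{(t)}$ (and symmetrically $\hat w_r^{(t)}(i)$) is the Polyak--Ruppert average $\frac1K\sum_k w_{c_i}^{(k)}$ of iterates produced by stochastic gradient descent on the convex quadratic objective $w\mapsto E^{\nu_i^{(t)}}(c_i,\theta_i^{(t)},w)$, the fitting distribution $\nu_i^{(t)}$ being the one from which $(s,a)$ is drawn in the inner loop of Algorithm~\ref{Alg_FedNPG}. First I would verify that the update direction $G_{c_i}^{(k)}=2({w_{c_i}^{(k)}}^\top\nabla_\theta\log\pi_i^{(t)}(a|s)-\widehat A_{c_i}^{(t)}(s,a))\nabla_\theta\log\pi_i^{(t)}(a|s)$ is, conditionally on the past, an unbiased estimate of the population gradient $\nabla_w E^{\nu_i^{(t)}}(c_i,\theta_i^{(t)},w)$: this uses $(s,a)\sim\nu_i^{(t)}$ together with the fact that the geometric-length rollout yields an unbiased estimator $\widehat A_{c_i}^{(t)}$ of the advantage $A_{c_i}^{(t)_i}(s,a)$. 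Thus the inner loop is genuine SGD on a least-squares problem whose Hessian is $2F(\theta_i^{(t)})$.

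Second, I would establish a finite-sample excess-risk bound for this SGD under $\nu_i^{(t)}$. Bounding the score by $L_\pi$ (Assumption~\ref{Assumption_Lipschitz}), the weight by $W$ (Assumption~\ref{Assumption_est_err}), and the advantage estimator in second moment by $O(1/(1-\gamma))$, the stochastic gradient obeys $\EB\|G_{c_i}^{(k)}\|_2^2\le C^2$ with $C:=2\sqrt d\,WL_\pi+\frac{2\sqrt d}{1-\gamma}+WL_\pi$ (the $\sqrt d$ factors coming from the $\ell_\infty$-to-$\ell_2$ conversion of the per-coordinate bounds). Invoking the $O(1/K)$ guarantee of averaged constant-step SGD for a least-squares objective — whose admissible step size and smoothness are certified by the upper bound $F\preccurlyeq G^2I_d$ of Assumption~\ref{Assumption_pd_Fisher} — gives $\EB[E^{\nu_i^{(t)}}(c_i,\theta_i^{(t)},\hat w_{c_i}^{(t)})]-\min_w E^{\nu_i^{(t)}}(c_i,\theta_i^{(t)},w)\le\frac{2C^2}{K}$. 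Combining with the approximation-error bound $\min_w E^{\nu_i^{(t)}}(c_i,\theta_i^{(t)},w)\le\epsilon_{bias}$ (Assumption~\ref{Assumption_func_approx_err}) controls the in-distribution error by $\epsilon_{bias}+\frac{2C^2}{K}$.

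Third comes the change of measure onto the comparator occupancy measure $\nu^*$. Writing $E^{\nu^*}(c_i,\theta^{(t)},\hat w)=\EB_{(s,a)\sim\nu_i^{(t)}}[\frac{\nu^*}{\nu_i^{(t)}}(\cdots)^2]\le\big\|\frac{\nu^*}{\nu_i^{(t)}}\big\|_\infty E^{\nu_i^{(t)}}(c_i,\theta^{(t)},\hat w)$, I would then show $\big\|\frac{\nu^*}{\nu_i^{(t)}}\big\|_\infty\le\frac{1}{1-\gamma}\big\|\frac{\nu^*}{\nu_0}\big\|_\infty$, which holds because the fitting occupancy measure dominates $(1-\gamma)\nu_0$ pointwise (its $t=0$ term contributes exactly $(1-\gamma)$ times the uniform restart $\nu_0$). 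Substituting the bound of the previous paragraph then produces precisely $\frac{1}{1-\gamma}\big\|\frac{\nu^*}{\nu_0}\big\|_\infty(\epsilon_{bias}+\frac{2C^2}{K})$, and the reward case is identical with $c_i$ replaced by $r$.

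I expect the main obstacle to be two-fold. The first is obtaining the $O(1/K)$ rather than the generic $O(1/\sqrt K)$ rate: this is only available through the least-squares structure of the compatible regression (a Bach--Moulines-type argument) or through a genuine strong-convexity lower bound on $F(\theta)$, so one must be careful that Assumption~\ref{Assumption_pd_Fisher} is used as a smoothness / step-size certificate with the curvature supplied by the quadratic structure. The second is the change-of-measure bookkeeping: one must verify the pointwise domination $\nu_i^{(t)}\ge(1-\gamma)\nu_0$ for the exact fitting distribution generated by the algorithm, and ensure the importance ratio is finite and controlled uniformly over $t$ and $i$, which is exactly what forces the uniform reference $\nu_0$ and the $\frac{1}{1-\gamma}$ prefactor. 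A minor additional point is reconciling that the regression is fit at $\theta_i^{(t)}$ while the error is reported at $\theta^{(t)}$; these coincide at the synchronization step that resets $\theta_i^{(t)}=\theta^{(t)}$, so the statement is read at that instant.
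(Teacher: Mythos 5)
Your proposal follows essentially the same route as the paper's own proof: both recognize the inner loop as averaged constant-step SGD on the compatible least-squares objective, invoke the Bach--Moulines-type $O(1/K)$ excess-risk bound for least squares, add the $\epsilon_{bias}$ term from Assumption~\ref{Assumption_func_approx_err}, and transfer from the fitting distribution to $\nu^*$ via the pointwise domination $\nu^{(t)}\geq(1-\gamma)\nu_0$, yielding the factor $\frac{1}{1-\gamma}\left\|\nu^*/\nu_0\right\|_\infty$. The only cosmetic difference is in bookkeeping: the paper parses the Bach--Moulines constant as $(\sigma\sqrt{d}+L_\pi W)^2$ with $\sigma\leq 2WL_\pi+\frac{2}{1-\gamma}$ a bound on the noise covariance relative to $F(\theta^{(t)})$, and certifies the step size $\alpha=1/(4L_\pi^2)$ by the score bound of Assumption~\ref{Assumption_Lipschitz} (not by Assumption~\ref{Assumption_pd_Fisher}), rather than reading the constant as a second-moment bound on the stochastic gradient as you do --- but both parsings produce the identical final expression.
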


\begin{proof}
Here we show 
$$
\EB E^{\nu^*}(c_1,\theta^{(t)}, \hat w_{c_1}^{(t)})\leq \frac{1}{1-\gamma}\left\|\frac{\nu^*}{\nu_0}\right\|_\infty\left(\epsilon_{bias}+\frac{2\left(2\sqrt{d}WL_\pi+\frac{2\sqrt{d}}{1-\gamma}+WL_\pi\right)^2}{K}\right),
$$
and the full conclusion can be obtained via similar arguments.
First we have:
$$
\begin{aligned}
&\EB\left[E^{\nu^*}(c_1,\theta^{(t)}, \hat w_{c_1}^{(t)})\right]\\
&=\EB\left[E^{\nu^*}(c_1,\theta^{(t)}, \hat w_{c_1}^{(t)})\right]\\
&\leq \EB\left[\left\|\frac{\nu^{*}}{\nu^{(t)}}\right\|_{\infty} E^{\nu^{(t)}}(c_1,\theta^{(t)}, \hat w_{c_1}^{(t)})\right]\\
&\leq \frac{1}{1-\gamma} \EB\left[\left\|\frac{\nu^{*}}{\nu_{0}}\right\|_{\infty} E^{\nu^{(t)}}(c_1,\theta^{(t)}, \hat w_{c_1}^{(t)})\right]\\
&=\frac{1}{1-\gamma}\left\|\frac{\nu^{*}}{\nu_{0}}\right\|_{\infty}\left(\EB \min_w E^{\nu^{(t)}}(c_1,\theta^{(t)},w)+\EB\left[E^{\nu^{(t)}}\left(c_1,\theta^{(t)}, \hat w^{(t)}_{c_1}\right)-\min_w E^{\nu^{(t)}}(c_1,\theta^{(t)},w)\right]\right)\\
&\leq \frac{1}{1-\gamma}\left\|\frac{\nu^{*}}{\nu_{0}}\right\|_{\infty}\left(\epsilon_{bias}+\EB\left[E^{\nu^{(t)}}\left(c_1,\theta^{(t)}, \hat w^{(t)}_{c_1}\right)-\min_w E^{\nu^{(t)}}(c_1,\theta^{(t)},w)\right]\right) .
\end{aligned}
$$
Set $\alpha=\frac{1}{4L_\pi^2}$, by Theorem 1 in \cite{bach2013non} we may get:
$$
\EB\left[E^{\nu^{(t)}}\left(c_1,\theta^{(t)}, \hat w^{(t)}_{c_1}\right)-\min_w E^{\nu^{(t)}}(c_1,\theta^{(t)},w)\right]\leq \frac{2(\sigma\sqrt{d}+L_\pi W)^2}{K}.
$$
$\sigma$ is defined as:
$$
\EB_{(s,a)\sim\nu^{(t)}}\left[G_{c_1}^{(t)}\left(G_{c_1}^{(t)}\right)^\top\right]\preceq \sigma^2 F(\theta^{(t)}),
$$
where $G_{c_1}^{(t)}:=2\left(\left(w_{c_{1}}^{*}\right)^\top \nabla_{\theta} \log \pi^{(t)}(a \mid s)-\widehat{A}_{c_{1}}^{(t)}(s, a)\right) \nabla_{\theta} \log \pi^{(t)}(a \mid s)$ and $w_{c_1}^*:=\underset{w}{\mathrm{argmin}} E^{\nu^{(t)}}(c_i,\theta^{(t)},w).$
We have $\sigma\leq 2WL_\pi +\frac{2}{1-\gamma}$. The proof is completed.
\end{proof}

\begin{lem}\label{Lemma_bound_on_var}
For $\forall i\in\{1,...,N\}$,
$$
\EB\left[\sum_{t=0}^{T-1}\left(d_{i}-\widehat{V}_{c_{i}}^{(t)_i}(\rho)\right)^2\right]\leq \frac{3T}{(1-\gamma)^2}.
$$
\end{lem}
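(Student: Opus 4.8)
The plan is to exploit the explicit form of the estimator $\widehat{V}_{c_i}^{(t)_i}(\rho)$ produced inside the inner loop of Algorithm~\ref{Alg_FedNPG}. There, after fixing the local policy $\pi_i^{(t)}$, one draws $K$ independent rollouts: for each $k$ one samples $s\sim\rho$ and a horizon $L\sim\text{Geometry}(1-\gamma)$, runs $\pi_i^{(t)}$ for $L$ steps, and records the partial sum $X_k:=\sum_{l=0}^{L-1}c_i(s_l,a_l)$; the estimator is then the empirical mean $\widehat{V}_{c_i}^{(t)_i}(\rho)=\frac{1}{K}\sum_{k=1}^K X_k$. First I would condition on the $\sigma$-algebra generated by everything up to the construction of $\pi_i^{(t)}$, so that the $X_k$ are i.i.d.; the bound I derive will be uniform over the conditioning and will therefore survive taking the outer expectation and summing over $t$.

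The key step is an elementary splitting that uses nonnegativity of the costs. Since $c_i\ge 0$ we have $d_i\ge 0$ and $\widehat{V}_{c_i}^{(t)_i}(\rho)\ge 0$, and for any two nonnegative reals $a,b$ one has $(a-b)^2\le a^2+b^2$. Hence
$$
\EB\big[(d_i-\widehat{V}_{c_i}^{(t)_i}(\rho))^2\big]\le d_i^2+\EB\big[(\widehat{V}_{c_i}^{(t)_i}(\rho))^2\big].
$$
With costs normalized to $[0,1]$ the threshold satisfies $d_i\le \frac{1}{1-\gamma}$, so $d_i^2\le\frac{1}{(1-\gamma)^2}$, and it only remains to control the second moment of the estimator.

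For that I would apply Jensen's inequality to the empirical mean, $\big(\frac{1}{K}\sum_k X_k\big)^2\le\frac{1}{K}\sum_k X_k^2$, and take expectations to obtain $\EB[(\widehat{V}_{c_i}^{(t)_i}(\rho))^2]\le\EB[X_1^2]$. Because $c_i\in[0,1]$, the partial sum obeys $0\le X_1\le L$, so $X_1^2\le L^2$; the second moment of a $\text{Geometry}(1-\gamma)$ horizon is $\EB[L^2]=\frac{\gamma(1+\gamma)}{(1-\gamma)^2}\le\frac{2}{(1-\gamma)^2}$. Combining the two displays gives the per-step bound $\EB[(d_i-\widehat{V}_{c_i}^{(t)_i}(\rho))^2]\le\frac{1}{(1-\gamma)^2}+\frac{2}{(1-\gamma)^2}=\frac{3}{(1-\gamma)^2}$, and summing over $t=0,\dots,T-1$ yields the claim.

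The argument is short, and the only places demanding care are (i) verifying the geometric second moment together with the normalization $c_i\in[0,1]$, $d_i\le 1/(1-\gamma)$ that pin the constant at exactly $3$, and (ii) the conditioning argument ensuring the $X_k$ are genuinely i.i.d.\ so that Jensen and the tower property apply cleanly. Neither is a genuine obstacle, so I expect no substantial difficulty beyond this bookkeeping; in particular, note that unbiasedness of the estimator is never invoked, only its second moment.
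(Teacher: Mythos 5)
Your proof is correct, and it differs from the paper's in the one step that carries the technical content. The opening move is the same in both arguments: the paper expands $\sum_t (d_i-\widehat V_{c_i}^{(t)_i}(\rho))^2$ and drops the nonpositive cross term, which is exactly your inequality $(a-b)^2\le a^2+b^2$ for $a,b\ge 0$, followed by the normalization $d_i^2\le 1/(1-\gamma)^2$ (implicit in the paper as well, so no gap relative to it). The difference is in how $\EB[(\widehat V_{c_i}^{(t)_i}(\rho))^2]$ is controlled. The paper uses the bias--variance decomposition $\EB[\widehat V^2]=\Var[\widehat V]+(\EB\widehat V)^2$, invoking unbiasedness of the single-rollout estimator and independence across the $K$ rollouts to write $\Var[\widehat V(\rho)]=\tfrac1K\Var[\widehat V(s)]$, arriving at $\tfrac{1+K}{K(1-\gamma)^2}\le \tfrac{2}{(1-\gamma)^2}$. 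You instead apply Jensen's inequality to the empirical mean, $\EB[\widehat V^2]\le \EB[X_1^2]$, and bound a single rollout pathwise by $X_1\le L$, $\EB[L^2]\le 2/(1-\gamma)^2$. Your route is more elementary and more robust: as you correctly note, it needs neither unbiasedness nor independence (identical distribution, or even just a uniform per-rollout second-moment bound, suffices). What it gives up is the $1/K$ variance-reduction factor visible in the paper's intermediate bound; since the paper immediately coarsens that factor to a constant anyway, both arguments land on the same constant $3$, and the loss is immaterial for the lemma as stated. One cosmetic point: your formula $\EB[L^2]=\gamma(1+\gamma)/(1-\gamma)^2$ presumes the geometric horizon is supported on $\{0,1,2,\dots\}$, whereas the convention that makes the paper's estimator unbiased (support $\{1,2,\dots\}$) gives $(1+\gamma)/(1-\gamma)^2$; the bound $\EB[L^2]\le 2/(1-\gamma)^2$ that you actually use holds under either convention, so nothing breaks.
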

\begin{proof}
Note that:
$$
\begin{aligned}
\EB\left[\sum_{t=0}^{T-1}\left(d_{i}-\widehat{V}_{c_{i}}^{(t)_i}(\rho)\right)^2\right]&=\sum_{t=0}^{T-1}d_i^2-2\sum_{t=0}^{T-1}d_i\EB\widehat{V}_{c_{i}}^{(t)_i}(\rho)+\sum_{t=0}^{T-1}\EB\left[\widehat{V}_{c_{i}}^{(t)_i}(\rho)\right]^2\\
&\leq \frac{T}{(1-\gamma)^2}+\sum_{t=0}^{T-1}\EB\left[\widehat{V}_{c_{i}}^{(t)_i}(\rho)\right]^2.
\end{aligned}
$$
Then it suffices to bound $\EB\left[\widehat{V}_{c_{i}}^{(t)_i}(\rho)\right]^2$.
Since
$$
\begin{aligned}
\EB\left[\widehat{V}_{c_{i}}^{(t)_i}(\rho)\right]^2 &=\operatorname{Var}\left[\widehat{V}_{c_{i}}^{(t)_i}(\rho)\right]+\left[\EB\widehat{V}_{c_{i}}^{(t)_i}(\rho)\right]^2\\
&=\frac{1}{K}\operatorname{Var}\left[\widehat{V}_{c_{i}}^{(t)_i}(s)\right]+\left[\EB\widehat{V}_{c_{i}}^{(t)_i}(\rho)\right]^2\\
&=\frac{1}{K}\EB\left[\widehat{V}_{c_{i}}^{(t)_i}(s)-{V}_{c_{i}}^{(t)_i}(s)\right]^2+\left[\EB\widehat{V}_{c_{i}}^{(t)_i}(\rho)\right]^2\\
&=\frac{1}{K}\EB_{K^\prime}\EB\left[\left(\sum_{k=1}^{K^\prime}c_i(s_k,a_k)-{V}_{c_{i}}^{(t)_i}(s)\right)^2\Bigg| K^\prime\right]+\left[\EB\widehat{V}_{c_{i}}^{(t)_i}(\rho)\right]^2\\
&\leq\frac{1}{K}\EB[K^\prime]^2+\left[\EB\widehat{V}_{c_{i}}^{(t)_i}(\rho)\right]^2\\
&\leq \frac{1+K}{K(1-\gamma)^2}\\
&\leq \frac{2}{(1-\gamma)^2}\\
\end{aligned}
$$
we complete the proof.
\end{proof}

\begin{lem}\label{Lemma_constraint_violation}
We have
$$
\EB\left[\frac{1}{T} \sum_{t=0}^{T-1} \sum_{i=1}^{N} \lambda_{i}^{(t)}\left(V_{c_{i}}^{*}(\rho)-V_{c_{i}}^{(t)_i}(\rho)\right)\right]\leq
\frac{2N\eta_\lambda}{(1-\gamma)^2}.
$$
\end{lem}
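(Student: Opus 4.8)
The plan is to recognize the left-hand side as (up to sign) the \emph{regret of projected stochastic gradient descent on the dual variable}, and to bound it by the standard telescoping argument after first replacing $V_{c_i}^{*}(\rho)$ by the threshold $d_i$.

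First I would exploit feasibility of the optimal policy. Since $\pi^*$ satisfies all constraints, $V_{c_i}^{*}(\rho)=J_{c_i}(\pi^*)\le d_i$ for every $i$, and since $\Lambda\subset[0,\infty)$ forces $\lambda_i^{(t)}\ge 0$, we get the pointwise bound
$$
\lambda_i^{(t)}\left(V_{c_i}^{*}(\rho)-V_{c_i}^{(t)_i}(\rho)\right)\le \lambda_i^{(t)}\left(d_i-V_{c_i}^{(t)_i}(\rho)\right).
$$
It therefore suffices to control $\EB\big[\tfrac1T\sum_t\sum_i\lambda_i^{(t)}(d_i-V_{c_i}^{(t)_i}(\rho))\big]$. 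Next I would invoke the dual update of Algorithm~\ref{Alg_FedNPG}, namely $\lambda_i^{(t+1)}=\operatorname{Proj}_\Lambda(\lambda_i^{(t)}-\eta_\lambda g_i^{(t)})$ with $g_i^{(t)}:=d_i-\widehat V_{c_i}^{(t)_i}(\rho)$. Because $0\in\Lambda$ and the Euclidean projection onto the convex set $\Lambda$ is non-expansive, comparing against $0$ gives $(\lambda_i^{(t+1)})^2\le(\lambda_i^{(t)}-\eta_\lambda g_i^{(t)})^2$, hence
$$
2\eta_\lambda g_i^{(t)}\lambda_i^{(t)}\le(\lambda_i^{(t)})^2-(\lambda_i^{(t+1)})^2+\eta_\lambda^2(g_i^{(t)})^2.
$$
Summing over $t=0,\dots,T-1$, telescoping, and using the initialization $\lambda_i^{(0)}=0$ together with $(\lambda_i^{(T)})^2\ge 0$ yields $2\eta_\lambda\sum_t g_i^{(t)}\lambda_i^{(t)}\le\eta_\lambda^2\sum_t(g_i^{(t)})^2$.

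I would then take expectations and use the tower property. Conditionally on the history $\mathcal F_t$ that determines $\lambda_i^{(t)}$ and $\theta_i^{(t)}$, the estimator $\widehat V_{c_i}^{(t)_i}(\rho)$ is built from fresh rollouts and is unbiased for $V_{c_i}^{(t)_i}(\rho)$; since $\lambda_i^{(t)}$ is $\mathcal F_t$-measurable, $\EB[\lambda_i^{(t)}g_i^{(t)}]=\EB[\lambda_i^{(t)}(d_i-V_{c_i}^{(t)_i}(\rho))]$. For the second-moment term I would apply Lemma~\ref{Lemma_bound_on_var}, which gives $\EB\sum_t(g_i^{(t)})^2=\EB\sum_t(d_i-\widehat V_{c_i}^{(t)_i}(\rho))^2\le 3T/(1-\gamma)^2$. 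Combining, dividing by $2\eta_\lambda T$, and summing over the $N$ agents produces
$$
\EB\left[\frac1T\sum_{t=0}^{T-1}\sum_{i=1}^N\lambda_i^{(t)}\left(d_i-V_{c_i}^{(t)_i}(\rho)\right)\right]\le\frac{\eta_\lambda}{2}\cdot\frac{3N}{(1-\gamma)^2}=\frac{3N\eta_\lambda}{2(1-\gamma)^2}\le\frac{2N\eta_\lambda}{(1-\gamma)^2},
$$
which, chained with the feasibility step, gives the claim.

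The argument is largely mechanical; the point that needs genuine care is the conditioning and measurability bookkeeping that lets us pass from the stochastic direction $g_i^{(t)}$ to its conditional mean $d_i-V_{c_i}^{(t)_i}(\rho)$, so that the cross term has exactly the desired expectation. I would also flag the role of the initialization $\lambda^{(0)}=0$: it is what makes the boundary term $(\lambda_i^{(0)})^2/(2\eta_\lambda)$ vanish, and without it an extra additive term of order $1/(\eta_\lambda T)$ would appear and break the clean $O(\eta_\lambda)$ shape of the stated bound.
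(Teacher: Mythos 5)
Your proof is correct and follows essentially the same route as the paper's: telescoping the squared dual iterates under the non-expansive projection (using $0\in\Lambda$), killing the cross term via conditional unbiasedness of $\widehat V_{c_i}^{(t)_i}(\rho)$, and invoking Lemma~\ref{Lemma_bound_on_var} for the second moment, with the only cosmetic difference being that you apply feasibility of $\pi^*$ at the start rather than inside the expansion. Your closing remark about the initialization is apt: the paper's telescoping identity $(\lambda_i^{(T)})^2=\sum_{t=0}^{T-1}[(\lambda_i^{(t+1)})^2-(\lambda_i^{(t)})^2]$ tacitly assumes $\lambda_i^{(0)}=0$ as well.
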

\begin{proof}
Note that $\forall i\in\{1,...,N\}$,
$$
\begin{aligned}
    \left(\lambda_i^{(T)}\right)^2&=\sum_{t=0}^{T-1}\left[\left(\lambda_i^{(t+1)}\right)^2-\left(\lambda_i^{(t)}\right)^2\right]\\
    &=\sum_{t=0}^{T-1}\left[\left(\operatorname{Proj}_\Lambda\left(\lambda_i^{(t)}-\eta_\lambda\left(d_i-\widehat{V}_{c_{i}}^{(t)_i}(\rho)\right)\right)\right)^2-\left(\lambda_i^{(t)}\right)^2\right]\\
    &\leq \sum_{t=0}^{T-1}\left[\left(\lambda_i^{(t)}-\eta_\lambda\left(d_i-\widehat{V}_{c_{i}}^{(t)_i}(\rho)\right)\right)^2-\left(\lambda_i^{(t)}\right)^2\right]\\
    &=\sum_{t=0}^{T-1}\eta_\lambda^2\left(d_i-\widehat{V}_{c_{i}}^{(t)_i}(\rho)\right)^2+2\sum_{t=0}^{T-1}\eta_\lambda\lambda_i^{(t)}\left(\widehat{V}_{c_{i}}^{(t)_i}(\rho)-d_i\right)\\
    &=\sum_{t=0}^{T-1}\eta_\lambda^2\left(d_i-\widehat{V}_{c_{i}}^{(t)_i}(\rho)\right)^2+2\sum_{t=0}^{T-1}\eta_\lambda\lambda_i^{(t)}\left(\widehat{V}_{c_{i}}^{(t)_i}(\rho)-{V}_{c_{i}}^{(t)_i}(\rho)\right)+2\sum_{t=0}^{T-1}\eta_\lambda\lambda_i^{(t)}\left({V}_{c_{i}}^{(t)_i}(\rho)-d_i\right)\\
    &\leq\sum_{t=0}^{T-1}\eta_\lambda^2\left(d_i-\widehat{V}_{c_{i}}^{(t)_i}(\rho)\right)^2+2\sum_{t=0}^{T-1}\eta_\lambda\lambda_i^{(t)}\left(\widehat{V}_{c_{i}}^{(t)_i}(\rho)-{V}_{c_{i}}^{(t)_i}(\rho)\right)+2\sum_{t=0}^{T-1}\eta_\lambda\lambda_i^{(t)}\left({V}_{c_{i}}^{(t)_i}(\rho)-V^*_{c_i}(\rho)\right).\\
    &
\end{aligned}
$$
The last inequality holds because of the feasibility of the optimal policy $\pi^*$.
We take expectations of both sides and rearrange terms:
$$
\EB\left[\sum_{t=0}^{T-1}\eta_\lambda\lambda_i^{(t)}\left(V^*_{c_i}(\rho)-{V}_{c_{i}}^{(t)_i}(\rho)\right)\right]\leq \frac{1}{2}\EB\left[\sum_{t=0}^{T-1}\eta_\lambda^2\left(d_{i}-\widehat{V}_{c_{i}}^{(t)_i}(\rho)\right)^2\right]+\EB\left[\sum_{t=0}^{T-1}\eta_\lambda\lambda_i^{(t)}\left(\widehat{V}_{c_{i}}^{(t)_i}(\rho)-{V}_{c_{i}}^{(t)_i}(\rho)\right)\right].
$$
First, we have $\EB\left[\sum_{t=0}^{T-1}\eta_\lambda\lambda_i^{(t)}\left(\widehat{V}_{c_{i}}^{(t)_i}(\rho)-{V}_{c_{i}}^{(t)_i}(\rho)\right)\right]=0$ due to the conditional unbiasedness of $\widehat{V}_{c_{i}}^{(t)_i}(\rho)$.
Using Lemma~\ref{Lemma_bound_on_var}, we also have 
$$
\EB\left[\sum_{t=0}^{T-1}\eta_\lambda^2\left(d_{i}-\widehat{V}_{c_{i}}^{(t)_i}(\rho)\right)^2\right]\leq \frac{3T\eta_\lambda^2}{(1-\gamma)^2}.
$$
Thus
$$\begin{aligned}
    \EB\left[\frac{1}{T} \sum_{t=0}^{T-1} \sum_{i=1}^{N} \lambda_{i}^{(t)}\left(V_{c_{i}}^{*}(\rho)-V_{c_{i}}^{(t)_i}(\rho)\right)\right]&\leq \frac{1}{2}\EB\left[\frac{1}{T} \sum_{t=0}^{T-1} \sum_{i=1}^{N}\eta_\lambda\left(d_{i}-\widehat{V}_{c_{i}}^{(t)_i}(\rho)\right)^2 \right]\\
&\leq \frac{2N\eta_\lambda}{(1-\gamma)^2}.
\end{aligned}
$$
We complete the proof.
\end{proof}
\begin{lem}\label{Lemma_bound_of_reward}
We have
$$
\begin{aligned}
&\EB\left[\frac{1}{T}\sum_{t=0}^{T-1}(V_r^*(\rho)-V_r^{(t)}(\rho))\right]\\
&\leq \frac{\log|\gA|}{T(1-\gamma)\eta_\theta}+\frac{(2(E-1)+1/2)\beta\eta_\theta V^2(N+1)}{1-\gamma}\left(\frac{N}{(1-\gamma)^2\xi^2}+1\right)+|\gA|L_\pi (E-1)\eta_\theta V\left(\frac{N}{(1-\gamma)^3\xi}+\frac{1}{(1-\gamma)^2}\right)\\
&\quad+\left(\frac{1}{1-\gamma}+\frac{2}{(1-\gamma)^2\xi}\right)\sqrt{\frac{1}{1-\gamma}\left\|\frac{\nu^*}{\nu_0}\right\|_\infty\left(\epsilon_{bias}+\frac{2\left(2\sqrt{d}VL_\pi+\frac{2\sqrt{d}}{1-\gamma}+VL_\pi\right)^2}{K}\right)}.\\
\end{aligned}
$$
\end{lem}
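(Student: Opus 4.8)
The plan is to bound the reward suboptimality of the \emph{aggregated} policy $\pi^{(t)}$ by first relating it to the per-agent local policies $\pi_i^{(t)}$ and then invoking the consensus bound already established in Lemma~\ref{Lemma_basic_expansion}. Concretely, for every $t$ I would use the exact decomposition
$$
V_r^*(\rho)-V_r^{(t)}(\rho)=\frac{1}{N}\sum_{i=1}^N\left(V_r^*(\rho)-V_r^{(t)_i}(\rho)\right)+\frac{1}{N}\sum_{i=1}^N\left(V_r^{(t)_i}(\rho)-V_r^{(t)}(\rho)\right),
$$
average over $t\in\{0,\dots,T-1\}$, and take expectations. The first group is precisely the quantity bounded by Lemma~\ref{Lemma_basic_expansion}; the second group measures the gap between each local policy and the aggregated policy and will produce the third term on the right-hand side.

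For the second group I would control each $|V_r^{(t)_i}(\rho)-V_r^{(t)}(\rho)|$ by the Lipschitz-value estimate of Lemma~\ref{Lemma_Lipschitz_value}, giving $|V_r^{(t)_i}(\rho)-V_r^{(t)}(\rho)|\le \frac{|\gA|L_\pi}{(1-\gamma)^2}\|\theta_i^{(t)}-\theta^{(t)}\|_2$. Taking expectations and substituting the $\ell_2$ drift bound $\EB\|\theta^{(t)}-\theta_i^{(t)}\|_2\le (E-1)\eta_\theta V\big(\tfrac{N}{(1-\gamma)\xi}+1\big)$ from Lemma~\ref{Lemma_bound_bar_theta_and_theta_i} reproduces exactly the term $|\gA|L_\pi(E-1)\eta_\theta V\big(\tfrac{N}{(1-\gamma)^3\xi}+\tfrac{1}{(1-\gamma)^2}\big)$.

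It then remains to bound the five pieces in Lemma~\ref{Lemma_basic_expansion}. The telescoping KL piece gives $\frac{\log|\gA|}{T(1-\gamma)\eta_\theta}$ directly. For the product piece $\frac{\beta}{1-\gamma}\|\bar\theta^{(t)}-\theta_i^{(t)}\|_2\|\hat w_i^{(t)}\|_2$ I would apply Cauchy--Schwarz in expectation and insert the second-moment drift bound $\EB\|\bar\theta^{(t)}-\theta_i^{(t)}\|_2^2\le 4(E-1)^2\eta_\theta^2V^2(N+1)(\tfrac{N}{(1-\gamma)^2\xi^2}+1)$ from Lemma~\ref{Lemma_bound_bar_theta_and_theta_i} together with a uniform second-moment bound $\EB\|\hat w_i^{(t)}\|_2^2=O\big(V^2(N+1)(\tfrac{N}{(1-\gamma)^2\xi^2}+1)\big)$ assembled from Assumption~\ref{Assumption_est_err} and $\lambda_i^{(t)}\le \tfrac{2}{(1-\gamma)\xi}$; this yields the $2(E-1)$ portion of the second term, while the explicit trailing term of Lemma~\ref{Lemma_basic_expansion} supplies the remaining $\tfrac12$ portion. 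The constraint piece $\sum_i\lambda_i^{(t)}(V_{c_i}^*(\rho)-V_{c_i}^{(t)_i}(\rho))$ is handled verbatim by Lemma~\ref{Lemma_constraint_violation}, contributing the $O\big(N\eta_\lambda/(1-\gamma)^2\big)$ constraint-gap term. Finally, for the two square-root compatible-error pieces I would push the expectation inside the square root by Jensen's inequality and then apply Lemma~\ref{Lemma_SGD_convergence}; combined with $\lambda_i^{(t)}\le \tfrac{2}{(1-\gamma)\xi}$ this yields the closed-form factor $\big(\tfrac{1}{1-\gamma}+\tfrac{2}{(1-\gamma)^2\xi}\big)\sqrt{\tfrac{1}{1-\gamma}\|\tfrac{\nu^*}{\nu_0}\|_\infty(\epsilon_{bias}+\cdots)}$.

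The main obstacle is the careful bookkeeping around the drift--gradient product: it requires a uniform $L^2$ bound on the update directions $\hat w_i^{(t)}=\tfrac1N\hat w_r^{(t)}(i)-\lambda_i^{(t)}\hat w_{c_i}^{(t)}$, which is not stated as a hypothesis directly but must be assembled from the bounded-estimation-error assumption and the multiplier bound, and then paired with the second-moment drift estimate via Cauchy--Schwarz so that the $(N+1)(\tfrac{N}{(1-\gamma)^2\xi^2}+1)$ structure lines up with the declared coefficient. A secondary subtlety is making sure the Jensen step and the multiplier bound are applied with the right normalization so that the cost-error coefficient collapses to $\tfrac{2}{(1-\gamma)^2\xi}$ rather than carrying a spurious extra factor of $N$.
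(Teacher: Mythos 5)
Your proposal follows essentially the same route as the paper's proof: the same agent-average/consensus decomposition, Lemma~\ref{Lemma_basic_expansion} for the agent-averaged suboptimality, Cauchy--Schwarz combined with the second-moment drift bound of Lemma~\ref{Lemma_bound_bar_theta_and_theta_i} and an assembled second-moment bound on $\hat w_i^{(t)}$ for the drift--gradient product, Lemma~\ref{Lemma_constraint_violation} for the multiplier-weighted constraint piece, Jensen plus Lemma~\ref{Lemma_SGD_convergence} for the compatible-error terms, and Lemma~\ref{Lemma_Lipschitz_value} with the $\ell_2$ drift bound for the consensus gap; all the coefficients you identify match the paper's.

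One discrepancy deserves attention. As you describe it, your proof retains the constraint-gap term $2N\eta_\lambda/(1-\gamma)^2$ produced by Lemma~\ref{Lemma_constraint_violation}, yet the statement of Lemma~\ref{Lemma_bound_of_reward} contains no $\eta_\lambda$ term at all, so what you would actually establish is the stated bound \emph{plus} $2N\eta_\lambda/(1-\gamma)^2$, which is strictly weaker than the statement. This is not a defect of your argument but of the paper: its own proof carries exactly this term through the intermediate display and then silently drops it in the final inequality and in the lemma statement. Your bookkeeping is the honest one, and the discrepancy is harmless downstream, since Theorem~\ref{Theorem_main} takes $\eta_\lambda=O(1/\sqrt{T})$, making the dropped term of the same $\widetilde O(N/\sqrt{T})$ order as the terms already present. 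Your closing caveat about a spurious factor of $N$ in the cost-error coefficient is also well taken: the final coefficient $\tfrac{2}{(1-\gamma)^2\xi}$ only comes out if the multiplier bound is applied to $\sum_{i}\lambda_i^{(t)}$ rather than to each $\lambda_i^{(t)}$ separately and then summed over the $N$ agents, a point the paper glosses over entirely.
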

\begin{proof}
$$
\begin{aligned}
&\EB\left[\frac{1}{TN}\sum_{t=0}^{T-1}\sum_{i=1}^N(V_r^*(\rho)-V_r^{(t)_i}(\rho))\right]\\
&\leq \frac{\log|\gA|}{T(1-\gamma)\eta_\theta}+\EB\left[\frac{1}{TN}\sum_{t=0}^{T-1}\sum_{i=1}^N \frac{\beta}{1-\gamma}\left\|\bar\theta^{(t)}-\theta_i^{(t)}\right\|_2\left\|\hat w_i^{(t)}\right\|_2\right]+\EB\left[\frac{1}{T}\sum_{t=0}^{T-1}\sum_{i=1}^N\lambda_i^{(t)}(V_{c_i}^{*}(\rho)-V_{c_i}^{(t)_i}(\rho))\right]\\
&\quad+\EB\left[\frac{1}{TN(1-\gamma)}\sum_{t=0}^{T-1}\sum_{i=1}^N\sqrt{E^{\nu^*}\left(r, \theta^{(t)},\hat w^{(t)}_r(i)\right)}+\frac{1}{T(1-\gamma)}\sum_{t=0}^{T-1}\sum_{i=1}^N\lambda_i^{(t)}\sqrt{E^{\nu^*}\left(c_i, \theta^{(t)},\hat w^{(t)}_{c_i}\right)}\right]\\
&\quad+\beta\eta_\theta V^2(N+1)\left(\frac{N}{2(1-\gamma)^3\xi^2}+\frac{1}{2(1-\gamma)}\right)\\
&\leq \frac{\log|\gA|}{T(1-\gamma)\eta_\theta}+\EB\left[\frac{1}{TN}\sum_{t=0}^{T-1}\sum_{i=1}^N \frac{\beta}{1-\gamma}\left\|\bar\theta^{(t)}-\theta_i^{(t)}\right\|_2\left\|\hat w_i^{(t)}\right\|_2\right]+\frac{2N\eta_\lambda}{(1-\gamma)^2}\\
&\quad+\left(\frac{1}{1-\gamma}+\frac{2}{(1-\gamma)^2\xi}\right)\sqrt{\frac{1}{1-\gamma}\left\|\frac{\nu^*}{\nu_0}\right\|_\infty\left(\epsilon_{bias}+\frac{2\left(2\sqrt{d}VL_\pi+\frac{2\sqrt{d}}{1-\gamma}+VL_\pi\right)^2}{K}\right)}\\
&\quad+\beta\eta_\theta V^2(N+1)\left(\frac{N}{2(1-\gamma)^3\xi^2}+\frac{1}{2(1-\gamma)}\right)\\
\end{aligned}
$$
The first inequality is true due to Lemma~\ref{Lemma_basic_expansion} and the second inequality is true due to Lemma~\ref{Lemma_SGD_convergence} and Lemma~\ref{Lemma_constraint_violation}.
By Cauchy's inequality and Lemma~\ref{Lemma_bound_bar_theta_and_theta_i} we may get
$$
\begin{aligned}
&\EB\left[\frac{1}{TN}\sum_{t=0}^{T-1}\sum_{i=1}^N \frac{\beta}{1-\gamma}\left\|\bar\theta^{(t)}-\theta_i^{(t)}\right\|_2\left\|\hat w_i^{(t)}\right\|_2\right]\\
&\leq \frac{1}{TN}\sum_{t=0}^{T-1}\sum_{i=1}^N \frac{\beta}{1-\gamma}\sqrt{\EB\left\|\bar\theta^{(t)}-\theta_i^{(t)}\right\|_2^2}\sqrt{\EB\left\|\hat w_i^{(t)}\right\|_2^2}\\
&\leq \frac{1}{TN}\sum_{t=0}^{T-1}\sum_{i=1}^N \frac{\beta}{1-\gamma}\left(2(E-1)\eta_\theta V^2(N+1)\left(\frac{N}{(1-\gamma)^2\xi^2}+1\right)\right)\\
&=\frac{2(E-1)\beta\eta_\theta V^2(N+1)}{1-\gamma}\left(\frac{N}{(1-\gamma)^2\xi^2}+1\right)
\end{aligned}
$$
Thus we have
$$
\begin{aligned}
&\EB\left[\frac{1}{TN}\sum_{t=0}^{T-1}\sum_{i=1}^N(V_r^*(\rho)-V_r^{(t)_i}(\rho))\right]\\
&\leq \frac{\log|\gA|}{T(1-\gamma)\eta_\theta}+\frac{(2(E-1)+1/2)\beta\eta_\theta V^2(N+1)}{1-\gamma}\left(\frac{N}{(1-\gamma)^2\xi^2}+1\right)\\
&\quad+\left(\frac{1}{1-\gamma}+\frac{2}{(1-\gamma)^2\xi}\right)\sqrt{\frac{1}{1-\gamma}\left\|\frac{\nu^*}{\nu_0}\right\|_\infty\left(\epsilon_{bias}+\frac{2\left(2\sqrt{d}VL_\pi+\frac{2\sqrt{d}}{1-\gamma}+VL_\pi\right)^2}{K}\right)}.\\
\end{aligned}
$$
Finally, we may complete the proof by noting that by Lemma~\ref{Lemma_Lipschitz_value} and Lemma~\ref{Lemma_bound_bar_theta_and_theta_i}, $\forall t\in\{0,...,T-1\}$, $\forall i\in\{1,...,N\}$
$$
\begin{aligned}
\EB|V_r^{(t)_i}-V_r^{(t)}|&\leq \frac{|\gA|L_\pi\EB\|\theta^{(t)}-\theta_i^{(t)}\|_2}{(1-\gamma)^2}\\
&\leq |\gA|L_\pi (E-1)\eta_\theta V\left(\frac{N}{(1-\gamma)^3\xi}+\frac{1}{(1-\gamma)^2}\right).
\end{aligned}
$$
Therefore, 
$$
\begin{aligned}
&\EB\left[\frac{1}{T}\sum_{t=0}^{T-1}(V_r^*(\rho)-V_r^{(t)}(\rho))\right]\\
&\leq\EB\left[\frac{1}{TN}\sum_{t=0}^{T-1}\sum_{i=1}^N(V_r^*(\rho)-V_r^{(t)_i}(\rho))\right]+\EB\left[\frac{1}{T}\sum_{t=0}^{T-1}|V_r^{(t)}(\rho)-V_r^{(t)_i}(\rho)|\right]\\
&\leq \frac{\log|\gA|}{T(1-\gamma)\eta_\theta}+\frac{(2(E-1)+1/2)\beta\eta_\theta V^2(N+1)}{1-\gamma}\left(\frac{N}{(1-\gamma)^2\xi^2}+1\right)+|\gA|L_\pi (E-1)\eta_\theta V\left(\frac{N}{(1-\gamma)^3\xi}+\frac{1}{(1-\gamma)^2}\right)\\
&\quad+\left(\frac{1}{1-\gamma}+\frac{2}{(1-\gamma)^2\xi}\right)\sqrt{\frac{1}{1-\gamma}\left\|\frac{\nu^*}{\nu_0}\right\|_\infty\left(\epsilon_{bias}+\frac{2\left(2\sqrt{d}VL_\pi+\frac{2\sqrt{d}}{1-\gamma}+VL_\pi\right)^2}{K}\right)}.\\
\end{aligned}
$$
\end{proof}

\begin{lem}\label{Lemma_bound_on_constraint}
$$
\begin{aligned}
&\sum_{i=1}^N\EB\left[\frac{1}{T}\sum_{t=0}^{T-1}V_{c_i}^{(t)}(\rho)-d_i\right]_+\\
&\leq(1-\gamma)\xi \Bigg\{\frac{\log|\gA|}{T(1-\gamma)\eta_\theta}+\frac{(2(E-1)+1/2)\beta\eta_\theta V^2(N+1)}{1-\gamma}\left(\frac{N}{(1-\gamma)^2\xi^2}+1\right)+\frac{2N}{T\eta_\lambda(1-\gamma)^2\xi^2}+\frac{2N\eta_\lambda}{(1-\gamma)^2}\\
&\quad+\left(\frac{1}{1-\gamma}+\frac{2}{(1-\gamma)^2\xi}\right)\sqrt{\frac{1}{1-\gamma}\left\|\frac{\nu^*}{\nu_0}\right\|_\infty\left(\epsilon_{bias}+\frac{2\left(2\sqrt{d}VL_\pi+\frac{2\sqrt{d}}{1-\gamma}+VL_\pi\right)^2}{K}\right)}\\
&\quad +\frac{|\gA|L_\pi (E-1)\eta_\theta V}{(1-\gamma)^2}\left(\frac{N}{(1-\gamma)\xi}+1\right)^2\Bigg\}.\\
\end{aligned}
$$
\end{lem}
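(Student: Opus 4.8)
The plan is to obtain the constraint-violation bound by feeding a primal-dual regret estimate into Lemma~\ref{Lemma_optimization_constraint}. I take $\bar\pi=\hat\pi$, the mixture policy that selects $t\in\{0,\dots,T-1\}$ uniformly and then follows $\pi^{(t)}$, so that $J_r(\bar\pi)=\frac{1}{T}\sum_{t=0}^{T-1}V_r^{(t)}(\rho)$ and $J_{c_i}(\bar\pi)=\frac{1}{T}\sum_{t=0}^{T-1}V_{c_i}^{(t)}(\rho)$. Setting $C=\frac{2}{(1-\gamma)\xi}$, which exceeds $2\lambda_i^\ast$ for every $i$ by Lemma~\ref{Lemma_strong_duality_bounded_dualvar_AUX}(b), Lemma~\ref{Lemma_optimization_constraint} converts any bound $J_r(\pi^\ast)-J_r(\bar\pi)+C\sum_{i=1}^N(J_{c_i}(\bar\pi)-d_i)_+\le\delta$ into $\sum_i(J_{c_i}(\bar\pi)-d_i)_+\le\frac{2\delta}{C}=(1-\gamma)\xi\,\delta$. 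Since the claimed right-hand side is exactly $(1-\gamma)\xi$ times the braced quantity, it suffices to take $\delta$ equal to that braced quantity and verify the combined bound $J_r(\pi^\ast)-J_r(\bar\pi)+C\sum_i(J_{c_i}(\bar\pi)-d_i)_+\le\delta$ in expectation.

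I split the combined quantity into a primal and a dual part by introducing comparator multipliers $\lambda^\dagger\in\Lambda$ with $\lambda_i^\dagger=C$ on the averaged violated constraints and $\lambda_i^\dagger=0$ elsewhere, so that $C\sum_i(J_{c_i}(\bar\pi)-d_i)_+=\frac{1}{T}\sum_t\sum_i\lambda_i^\dagger(V_{c_i}^{(t)}(\rho)-d_i)$ and the combined quantity equals $J_r(\pi^\ast)-\frac{1}{T}\sum_t L_0(\lambda^\dagger,\pi^{(t)})$. Adding and subtracting $\frac{1}{T}\sum_t L_0(\lambda^{(t)},\pi^{(t)})$ yields a primal part $(I)=J_r(\pi^\ast)-\frac{1}{T}\sum_t L_0(\lambda^{(t)},\pi^{(t)})$ and a dual-regret part $(II)=\frac{1}{T}\sum_t\sum_i(\lambda_i^{(t)}-\lambda_i^\dagger)(d_i-V_{c_i}^{(t)}(\rho))$. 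For $(I)$ I use feasibility of $\pi^\ast$ (so that $V_r^\ast(\rho)\le L_0(\lambda^{(t)},\pi^\ast)$) to reduce it to the Lagrangian suboptimality $\frac{1}{T}\sum_t[L_0(\lambda^{(t)},\pi^\ast)-L_0(\lambda^{(t)},\pi^{(t)})]$; this is precisely what the mirror-descent expansion of Lemma~\ref{Lemma_basic_expansion} controls, and after converting the individual iterates $V_\diamond^{(t)_i}$ to the aggregated $V_\diamond^{(t)}$ via Lemma~\ref{Lemma_Lipschitz_value} and Lemma~\ref{Lemma_bound_bar_theta_and_theta_i}, bounding the transferred approximation errors via Lemma~\ref{Lemma_SGD_convergence}, and absorbing the residual $\sum_i\lambda_i^{(t)}(V_{c_i}^\ast-V_{c_i}^{(t)_i})$ term via Lemma~\ref{Lemma_constraint_violation}, it reproduces precisely the block of terms already appearing in Lemma~\ref{Lemma_bound_of_reward}, with the squared Lipschitz factor $(\frac{N}{(1-\gamma)\xi}+1)^2$ coming from the $\lambda_i^\dagger$-weighted drift conversion.

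For the dual part $(II)$ I run the standard projected-gradient drift on $\tfrac12\|\lambda^\dagger-\lambda^{(t)}\|_2^2$ using the update $\lambda_i^{(t+1)}=\operatorname{Proj}_\Lambda(\lambda_i^{(t)}-\eta_\lambda(d_i-\widehat V_{c_i}^{(t)_i}(\rho)))$. Telescoping gives $(II)\le\frac{\|\lambda^\dagger\|_2^2}{2T\eta_\lambda}+\frac{\eta_\lambda}{2T}\sum_t\sum_i(d_i-\widehat V_{c_i}^{(t)_i}(\rho))^2$ plus the discrepancy between the true $V_{c_i}^{(t)}(\rho)$ and the estimate $\widehat V_{c_i}^{(t)_i}(\rho)$ actually used in the update. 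The first term is bounded using $\|\lambda^\dagger\|_2^2\le NC^2=\frac{4N}{(1-\gamma)^2\xi^2}$, which produces the new $\frac{2N}{T\eta_\lambda(1-\gamma)^2\xi^2}$ contribution, and the second term is controlled by Lemma~\ref{Lemma_bound_on_var}, giving the $\frac{2N\eta_\lambda}{(1-\gamma)^2}$ contribution.

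The main obstacle is the discrepancy term in $(II)$, namely $\frac{1}{T}\sum_t\sum_i(\lambda_i^{(t)}-\lambda_i^\dagger)(\widehat V_{c_i}^{(t)_i}(\rho)-V_{c_i}^{(t)}(\rho))$. The part multiplied by $\lambda_i^{(t)}$ vanishes in expectation because $\lambda_i^{(t)}$ is measurable with respect to the history preceding the $t$-th estimate and $\widehat V_{c_i}^{(t)_i}(\rho)$ is conditionally unbiased, exactly as in the proof of Lemma~\ref{Lemma_constraint_violation}; the delicate point is the part multiplied by the \emph{data-dependent} comparator $\lambda_i^\dagger$, which is correlated with the per-step sampling noise through the later iterates and therefore does not obviously cancel. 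Handling this cleanly — by freezing $\lambda^\dagger$ as a deterministic comparator and controlling the residual through the second-moment bound of Lemma~\ref{Lemma_bound_on_var}, together with the averaged-versus-individual Lipschitz correction $V_{c_i}^{(t)_i}-V_{c_i}^{(t)}$ supplied by Lemma~\ref{Lemma_bound_bar_theta_and_theta_i} — is where the bookkeeping is heaviest and where the constants must be matched so that no non-vanishing term survives.
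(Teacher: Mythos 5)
Your proposal follows essentially the same route as the paper's proof: an expectation regret bound holding for an arbitrary multiplier comparator $\lambda\in\Lambda^N$ (assembled from Lemma~\ref{Lemma_basic_expansion}, the drift corrections of Lemmas~\ref{Lemma_Lipschitz_value} and~\ref{Lemma_bound_bar_theta_and_theta_i}, the SGD error bound of Lemma~\ref{Lemma_SGD_convergence}, and the dual online-gradient telescoping that produces the $\sum_i\lambda_i^2/(2T\eta_\lambda)$ and $2N\eta_\lambda/(1-\gamma)^2$ terms), followed by the data-dependent choice $\lambda_i=\tfrac{2}{(1-\gamma)\xi}$ on the averaged-violated constraints and $\lambda_i=0$ otherwise, and finally Lemma~\ref{Lemma_optimization_constraint} with $C=\tfrac{2}{(1-\gamma)\xi}$ to extract the positive-part sum. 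The one ``delicate point'' you flag---that the data-dependent comparator is correlated with the sampling noise, so the cross term $\tfrac{1}{T}\sum_t\sum_i\lambda_i^\dagger\bigl(\widehat V_{c_i}^{(t)_i}(\rho)-V_{c_i}^{(t)_i}(\rho)\bigr)$ does not obviously vanish in expectation---is genuine, but the paper's own proof does not resolve it either: it states the bound ``for any $\lambda\in\Lambda^N$'' (valid as written only for deterministic $\lambda$) and then substitutes the random comparator without comment, so on this point your write-up is more, not less, careful than the paper.
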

\begin{proof}
For any $\lambda=(\lambda_1,...,\lambda_N)\in \Lambda^N$, 
$$
\begin{aligned}
&\EB\left[\frac{1}{TN}\sum_{t=0}^{T-1}\sum_{i=1}^N(V_r^*(\rho)-V_r^{(t)_i}(\rho))\right]+\EB\left[\frac{1}{T}\sum_{t=0}^{T-1}\sum_{i=1}^N\lambda_i(V_{c_i}^{(t)_i}(\rho)-d_i)\right]\\
&\leq \frac{\log|\gA|}{T(1-\gamma)\eta_\theta}+\frac{(2(E-1)+1/2)\beta\eta_\theta V^2(N+1)}{1-\gamma}\left(\frac{N}{(1-\gamma)^2\xi^2}+1\right)+\frac{\sum_{i=1}^N \lambda_i^2}{2T\eta_\lambda}+\frac{2N\eta_\lambda}{(1-\gamma)^2}\\
&\quad+\left(\frac{1}{1-\gamma}+\frac{2}{(1-\gamma)^2\xi}\right)\sqrt{\frac{1}{1-\gamma}\left\|\frac{\nu^*}{\nu_0}\right\|_\infty\left(\epsilon_{bias}+\frac{2\left(2\sqrt{d}VL_\pi+\frac{2\sqrt{d}}{1-\gamma}+VL_\pi\right)^2}{K}\right)}.\\
\end{aligned}
$$
By Lemma~\ref{Lemma_Lipschitz_value} and Lemma~\ref{Lemma_bound_bar_theta_and_theta_i}, we have
$$
\begin{aligned}
\EB|V_\diamond^{(t)_i}-V_\diamond^{(t)}|&\leq \frac{|\gA|L_\pi\EB\|\theta^{(t)}-\theta_i^{(t)}\|_2}{(1-\gamma)^2}\\
&\leq |\gA|L_\pi (E-1)\eta_\theta V\left(\frac{N}{(1-\gamma)^3\xi}+\frac{1}{(1-\gamma)^2}\right).
\end{aligned}
$$
Thus
$$
\begin{aligned}
&\EB\left[\frac{1}{T}\sum_{t=0}^{T-1}(V_r^*(\rho)-V_r^{(t)}(\rho))\right]+\EB\left[\frac{1}{T}\sum_{t=0}^{T-1}\sum_{i=1}^N\lambda_i(V_{c_i}^{(t)}(\rho)-d_i)\right]\\
&\leq \EB\left[\frac{1}{TN}\sum_{t=0}^{T-1}\sum_{i=1}^N(V_r^*(\rho)-V_r^{(t)_i}(\rho))\right]+\EB\left[\frac{1}{TN}\sum_{t=0}^{T-1}\sum_{i=1}^N|V_r^{(t)}(\rho)-V_r^{(t)_i}(\rho)|\right]\\
&\quad+ \EB\left[\frac{1}{T}\sum_{t=0}^{T-1}\sum_{i=1}^N\lambda_i(V_{c_i}^{(t)_i}(\rho)-d_i)\right]+\EB\left[\frac{1}{T}\sum_{t=0}^{T-1}\sum_{i=1}^N\lambda_i|V_{c_i}^{(t)_i}(\rho)-V_{c_i}^{(t)}(\rho)|\right]\\
&\leq \frac{\log|\gA|}{T(1-\gamma)\eta_\theta}+\frac{(2(E-1)+1/2)\beta\eta_\theta V^2(N+1)}{1-\gamma}\left(\frac{N}{(1-\gamma)^2\xi^2}+1\right)+\frac{\sum_{i=1}^N \lambda_i^2}{2T\eta_\lambda}+\frac{2N\eta_\lambda}{(1-\gamma)^2}\\
&\quad+\left(\frac{1}{1-\gamma}+\frac{2}{(1-\gamma)^2\xi}\right)\sqrt{\frac{1}{1-\gamma}\left\|\frac{\nu^*}{\nu_0}\right\|_\infty\left(\epsilon_{bias}+\frac{2\left(2\sqrt{d}VL_\pi+\frac{2\sqrt{d}}{1-\gamma}+VL_\pi\right)^2}{K}\right)}\\
&\quad + \left(1+\sum_{i=1}^N\lambda_i\right)|\gA|L_\pi (E-1)\eta_\theta V\left(\frac{N}{(1-\gamma)^3\xi}+\frac{1}{(1-\gamma)^2}\right)
.\\
\end{aligned}
$$
We take $\lambda_i=0$ if $\frac{1}{T}\sum_{t=0}^{T-1}V_{c_i}^{(t)_i}(\rho)\leq d_i$ and $\lambda_i=\frac{2}{(1-\gamma)\xi}$ otherwise.
Then we obtain
$$
\begin{aligned}
&\EB\left[\frac{1}{T}\sum_{t=0}^{T-1}(V_r^*(\rho)-V_r^{(t)}(\rho))\right]+\frac{2}{(1-\gamma)\xi}\sum_{i=1}^N\EB\left[\frac{1}{T}\sum_{t=0}^{T-1}V_{c_i}^{(t)}(\rho)-d_i\right]_+\\
&\leq \frac{\log|\gA|}{T(1-\gamma)\eta_\theta}+\frac{(2(E-1)+1/2)\beta\eta_\theta V^2(N+1)}{1-\gamma}\left(\frac{N}{(1-\gamma)^2\xi^2}+1\right)+\frac{2N}{T\eta_\lambda(1-\gamma)^2\xi^2}+\frac{2N\eta_\lambda}{(1-\gamma)^2}\\
&\quad+\left(\frac{1}{1-\gamma}+\frac{2}{(1-\gamma)^2\xi}\right)\sqrt{\frac{1}{1-\gamma}\left\|\frac{\nu^*}{\nu_0}\right\|_\infty\left(\epsilon_{bias}+\frac{2\left(2\sqrt{d}VL_\pi+\frac{2\sqrt{d}}{1-\gamma}+VL_\pi\right)^2}{K}\right)}\\
&\quad +\frac{|\gA|L_\pi (E-1)\eta_\theta V}{(1-\gamma)^2}\left(\frac{N}{(1-\gamma)\xi}+1\right)^2
.\\
\end{aligned}
$$
Noting that there always exits a policy $\pi^\prime$ such that $V_\diamond^{\pi^\prime}(\rho)=\frac{1}{T}\sum_{t=0}^{T-1}V_\diamond^{(t)}(\rho)$:
$$
\begin{aligned}
&\EB\left[(V_r^*(\rho)-V_r^{\pi^\prime}(\rho))\right]+\frac{2}{(1-\gamma)\xi}\sum_{i=1}^N\EB\left[V_{c_i}^{\pi^\prime}(\rho)-d_i\right]_+\\
&\leq \frac{\log|\gA|}{T(1-\gamma)\eta_\theta}+\frac{(2(E-1)+1/2)\beta\eta_\theta V^2(N+1)}{1-\gamma}\left(\frac{N}{(1-\gamma)^2\xi^2}+1\right)+\frac{2N}{T\eta_\lambda(1-\gamma)^2\xi^2}+\frac{2N\eta_\lambda}{(1-\gamma)^2}\\
&\quad+\left(\frac{1}{1-\gamma}+\frac{2}{(1-\gamma)^2\xi}\right)\sqrt{\frac{1}{1-\gamma}\left\|\frac{\nu^*}{\nu_0}\right\|_\infty\left(\epsilon_{bias}+\frac{2\left(2\sqrt{d}VL_\pi+\frac{2\sqrt{d}}{1-\gamma}+VL_\pi\right)^2}{K}\right)}\\
&\quad +\frac{|\gA|L_\pi (E-1)\eta_\theta V}{(1-\gamma)^2}\left(\frac{N}{(1-\gamma)\xi}+1\right)^2.
\end{aligned}
$$
Now we apply Lemma~\ref{Lemma_optimization_constraint} to get the conclusion:
$$
\begin{aligned}
&\sum_{i=1}^N\EB\left[\frac{1}{T}\sum_{t=0}^{T-1}V_{c_i}^{(t)}(\rho)-d_i\right]_+=\sum_{i=1}^N\EB\left[V_{c_i}^{\pi^\prime}(\rho)-d_i\right]_+\\
&\leq(1-\gamma)\xi \Bigg\{\frac{\log|\gA|}{T(1-\gamma)\eta_\theta}+\frac{(2(E-1)+1/2)\beta\eta_\theta V^2(N+1)}{1-\gamma}\left(\frac{N}{(1-\gamma)^2\xi^2}+1\right)+\frac{2N}{T\eta_\lambda(1-\gamma)^2\xi^2}+\frac{2N\eta_\lambda}{(1-\gamma)^2}\\
&\quad+\left(\frac{1}{1-\gamma}+\frac{2}{(1-\gamma)^2\xi}\right)\sqrt{\frac{1}{1-\gamma}\left\|\frac{\nu^*}{\nu_0}\right\|_\infty\left(\epsilon_{bias}+\frac{2\left(2\sqrt{d}VL_\pi+\frac{2\sqrt{d}}{1-\gamma}+VL_\pi\right)^2}{K}\right)}\\
&\quad +\frac{|\gA|L_\pi (E-1)\eta_\theta V}{(1-\gamma)^2}\left(\frac{N}{(1-\gamma)\xi}+1\right)^2\Bigg\}.\\
\end{aligned}
$$
\end{proof}
\begin{proof}[Proof of Theorem~\ref{Theorem_main}]
    Theorem~\ref{Theorem_main} is a direct consequence of the combination of Lemma~\ref{Lemma_bound_of_reward} and Lemma~\ref{Lemma_bound_on_constraint}.
\end{proof}

\end{document}